\newcommand{\spire}{\texttt{SPIRE}\xspace}
\newcommand{\kaan}[1]{\textcolor{black}{#1}}
\newcommand{\rem}[1]{\textcolor{black}{#1}}
\title{\spire: Conditional Personalization for Federated Diffusion Generative Models}
\author{Kaan Ozkara\thanks{Department of Electrical and Computer Engineering, University of California Los Angeles\ \{kaan@g.ucla.edu, ruida@g.ucla.edu, suhas@ee.ucla.edu\}} 
\and 
        Ruida Zhou$^*$
\and 
        Suhas Diggavi$^*$
}
\begin{document}

\maketitle

\begin{abstract}
Recent advances in diffusion models have revolutionized generative AI, but their sheer size makes on‑device personalization, and thus effective federated learning (FL), infeasible. We propose Shared Backbone Personal Identity Representation Embeddings (\spire), a framework that casts per‑client diffusion based generation as conditional generation in FL. \spire factorizes the network into (i) a high‑capacity global backbone that learns a population‑level score function and (ii) lightweight, learnable client embeddings that encode local data statistics. This separation enables parameter‑efficient fine‑tuning that touches $<0.01\%$ of weights. We provide the first theoretical bridge between conditional diffusion training and maximum‑likelihood estimation in Gaussian‑mixture models. For a two‑component mixture we prove that gradient descent on the DDPM with respect to mixing weights loss recovers the optimal mixing weights and enjoys dimension‑free error bounds. Our analysis also hints at how client embeddings act as biases that steer a shared score network toward personalized distributions. Empirically, \spire matches or surpasses strong baselines during collaborative pre‑training, and vastly outperforms them when adapting to unseen/new clients, reducing Kernel Inception Distance while updating only hundreds of parameters. \spire further mitigates catastrophic forgetting and remains robust across fine-tuning learning‑rate and epoch choices.

\end{abstract}

\section{Introduction}\label{sec:intro}

Modern diffusion models have achieved striking success in image, audio and video synthesis, yet their ever‑growing scale puts them beyond the reach of the low‑power devices that generate most of today’s data.  \emph{Federated learning} (FL) offers a remedy by training a single global model across millions of clients without centralizing raw data.  Unfortunately, a one‑size‑fits‑all generator rarely serves individual users well: camera characteristics, usage patterns and personal preferences all induce \emph{client‑specific} distributions that differ markedly from the global average.  These discrepancies are most pronounced for \emph{new clients}, who arrive after the global model has been trained and possess only a handful of local samples. A key question is, how can we enable a large diffusion model with the flexibility to personalize quickly—even when users' data are scarce and heterogeneous?

\textbf{Our key insight.}  Many real‑world data sets share a \emph{high‑dimensional structure} that is common across clients (e.g., the geometry of natural images) plus a \emph{low‑dimensional} client‑specific component (e.g., class priors, color statistics or individual identities).  We formalize this intuition through a simple heterogeneity model in which shared parameters~$\phi$ captures the complex structure, whereas each client controls lightweight parameters~$\gamma_j$. For diffusion models we operate the split by (i)~training a \emph{shared backbone} that learns a population score function and (ii)~injecting a \emph{conditioning embedding} that acts as a signature of client statistics.  Personalization thus reduces to jointly learning $\phi$ from all users and individually learning \(\gamma_j\)—a few hundred parameters instead of millions—making it feasible to adapt on‑device. Our key contributions can be summarized as follows.
\vspace{-6pt}
\setlist[itemize]{noitemsep}
\begin{itemize}[leftmargin=1pt]
    \item We introduce \spire, a framework that treats per‑client diffusion modeling as a conditional generation problem in FL.  \spire decouples global and local learning through an embedding‑based conditioning mechanism, enabling parameter‑efficient fine‑tuning for new clients. who have not participated in the federated pretraining. 
    \item We provide the \emph{first theoretical analysis} connecting conditional diffusion training to maximum‑likelihood estimation in Gaussian‑mixture models (GMMs).  In the two‑component case we show that gradient descent on the DDPM objective with respect to mixing weights (which corresponds to low dimensional client specific parameters $\gamma_j$) finds the global maximum and derive dimension‑free error bounds (Theorem~\ref{thm:mixing-weight}). The GMM also provides a theoretical insight into conditioning mechanism in our architecture, including bias to steer personalization.
    \item We develop a practical FL algorithm that trains backbone and embeddings jointly, requires \emph{no additional communication} during personalization, and is robust to catastrophic forgetting. Our experiments on \textsc{MNIST}, \textsc{CIFAR‑10} and \textsc{CelebA} \cite{liu2015faceattributes} demonstrate that \spire matches or exceeds strong baselines during collaborative pre‑training and \emph{vastly outperforms} them when adapting to new clients while updating fewer than $0.01\%$ of the parameters.
\end{itemize}
\vspace{-5pt}

\textbf{Personalized FL. } There has been very limited work on personalized federated diffusion models, despite the vast amount of work for personalized FL, particularly for supervised learning scenarios. The previous personalized FL works have focused on adjusting the optimization criteria \citep{dinh2020personalized,hanzely2020federated, ozkara2023}, meta learning based approaches \citep{fallah2020personalized,acar2021debiasing, kohdak2019adaptive}, clustering based approaches \cite{mansour2020approaches}, knowledge distillation based approaches \citep{ozkara2021quped, lin2020ensemble} and so on. Conceptually, even though exclusively supervised learning is considered, the most related to our work is shared representation approach in \citep{collins-icml21}, where the authors vertically partition neural networks as global feature extractors and local heads. In contrast, our approach is a horizontal partitioning, where high dimensional backbone is trained with light weight conditioning information based on client's data statistics. In terms of the task, \citep{ozkara2025adept} also looks at personalized diffusion models, through a lens of statistical hierarchical Bayesian framework. Their method causes a regularization in the loss function, whereas, ours is based on a conditional architecture. Furthermore their method requires two separate models which is less suitable for new clients. \\
\textbf{Conditional diffusion models.} Conditioning is one of the key properties to add additional information while training diffusion models \cite{dhariwal2021diffusion}. The most common way of conditioning is to embed the information to a vector space and use in intermediate layers as bias (or modulate the activations). Our work explains this use in a principled way and connects it to a theoretical view (see Section~\ref{sec:theory}), for the first time, as far as we are aware. We utilize the embeddings to estimate a function that extracts the client data statistics. Furthermore, our work utilizes conditioning mechanism as a way to do PEFT, we are not aware of any other works that consider conditioning for FL and new clients. \\
\textbf{Diffusion models for GMMs.} The \emph{two‑component symmetric Gaussian‑mixture model (GMM)} has long served as a testbed for analyzing EM‑algorithm convergence (see, e.g., \citep{daskalakis17b,kwon2020emalgorithmgivessampleoptimality}). \citep{shah2023learning} shows that diffusion models (DDPM loss) can be learned through gradient descent to estimate GMM score function when mixing weights, covariances are known. \citep{chen2024learninggeneralgaussianmixtures}, further shows diffusion models can be used to efficiently learn general GMMs, through piecewise polynomial function approximations. In Section~\ref{sec:theory}, our method extends the former; in a distributed and heterogeneous setup, we use gradient descent to learn the mixing weights of the GMMs and prove dimension-free bounds. 

Beyond the studies highlighted above, the literature on Personalized Federated Learning (FL) has evolved along several complementary tracks. Meta‑learning approaches aim to learn initialization points that quickly adapt to each client’s data \citep{fallah2020personalized,acar2021debiasing,kohdak2019adaptive}. Regularization‑based methods encourage personalization by adding client‑specific penalties to the global objective \citep{deng2020adaptive,mansour2020approaches,hanzely2020federated}. Clustered FL partitions clients into groups with similar data distributions \citep{zhang2021personalized,mansour2020approaches,ghosh2020efficient,marfoq2021federated}, while knowledge‑distillation techniques transfer information between local and global models \citep{li2020federated,ozkara2021quped}. Multi‑task formulations treat each client as a related task to optimize jointly \citep{dinh2020personalized,smith2017federated,vanhaesebrouck2017decentralized,zantedeschi2020fully}, and common‑representation methods share feature extractors across clients \citep{du2021fewshot,tian2020rethinking,collins-icml21}. A hierarchical Bayesian viewpoint has recently inspired new supervised FL algorithms with adaptation capabilities \citep{ozkara2023,chen2022selfaware,kotelevskii2022fedpop}.

\section{Problem setup: Personalized federated diffusion generative models}

\begin{figure}[h]
    \centering
    \begin{subfigure}[b]{0.45\linewidth}
        \centering
        \includegraphics[width=\linewidth]{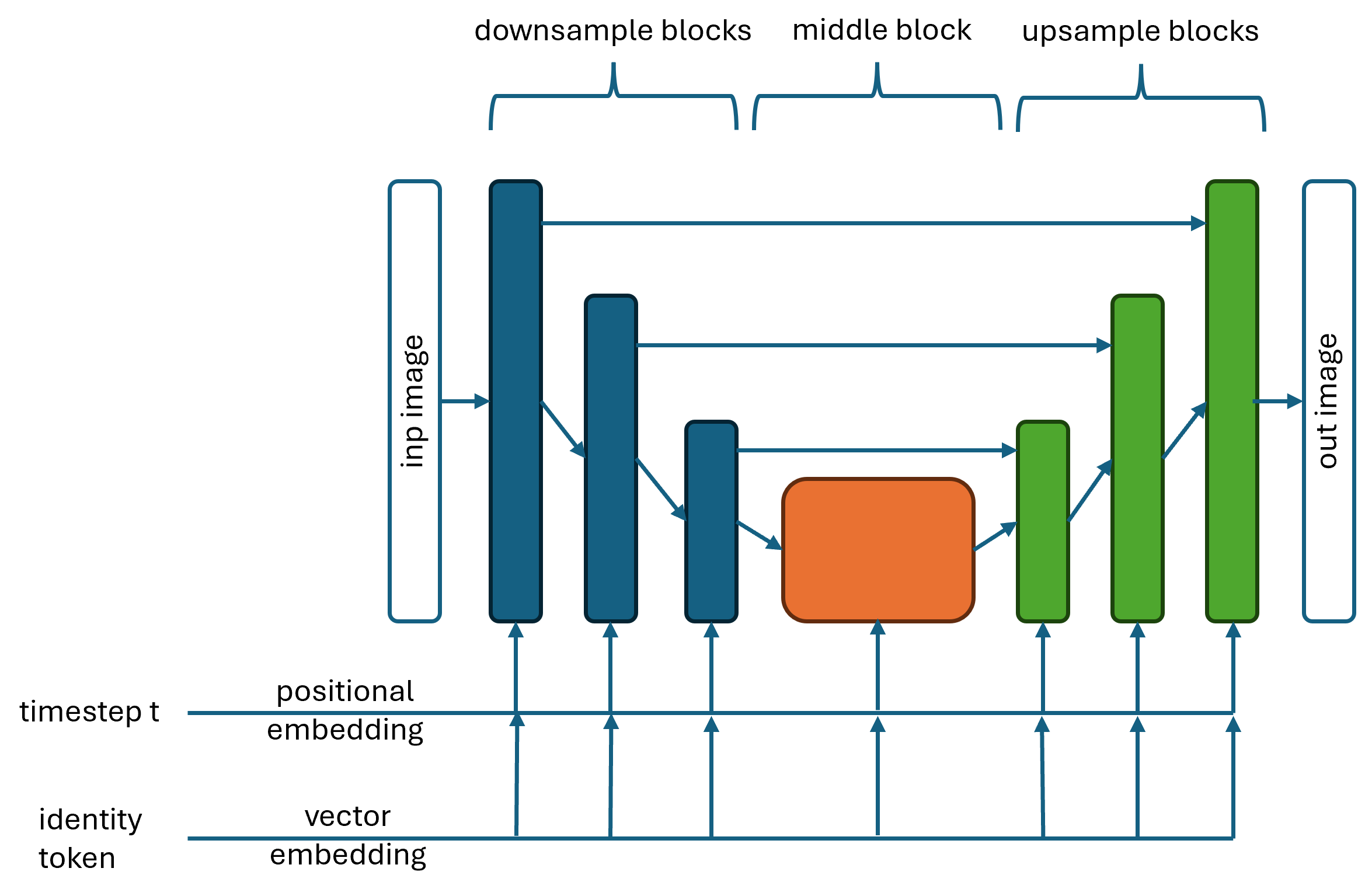}
        \subcaption{Overall diffusion model architecture.}
        \label{fig:architecture}
    \end{subfigure}
    \hfill
    \begin{subfigure}[b]{0.45\linewidth}
        \centering
        \includegraphics[width=\linewidth]{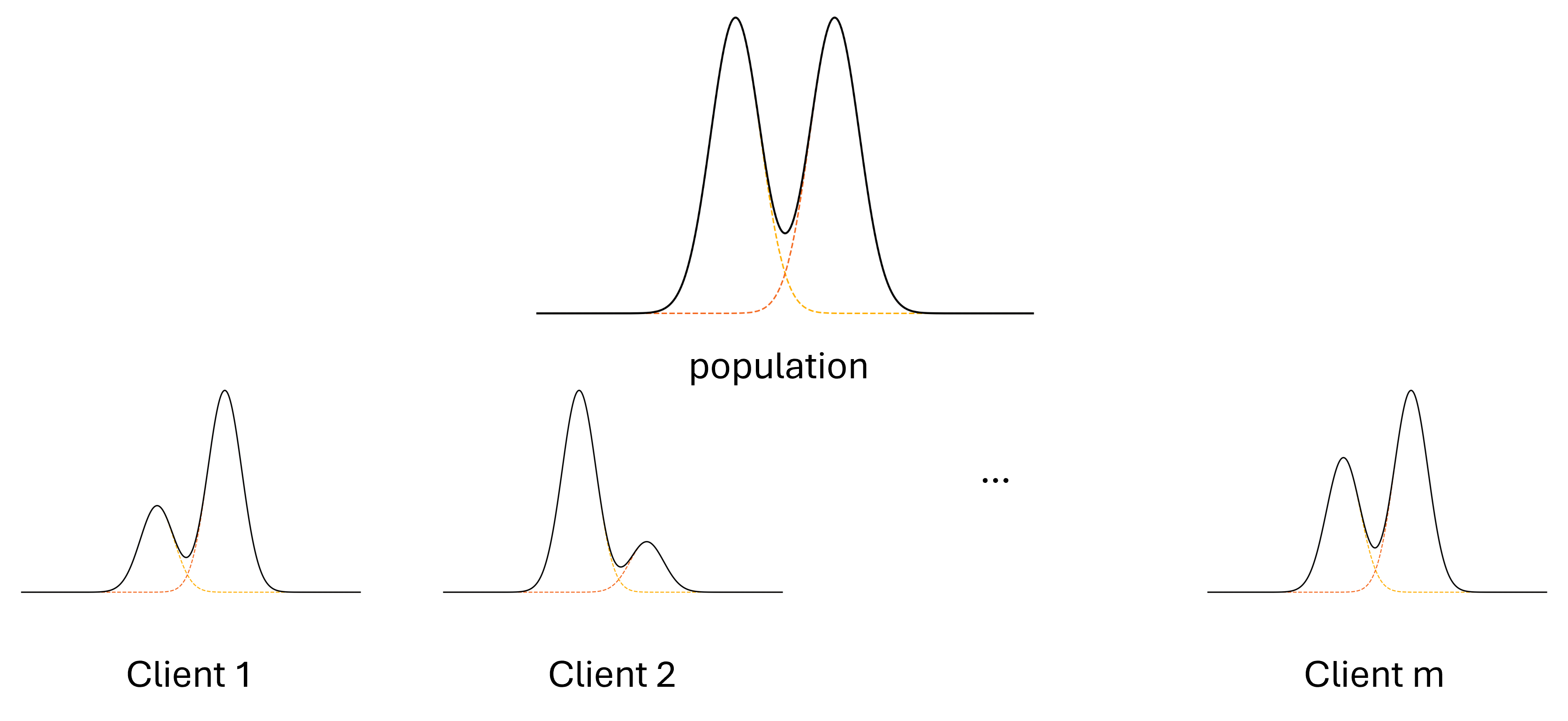}
        \subcaption{Model of heterogeneity for GMMs.}
        \label{fig:population-model}
    \end{subfigure}
    \caption{(Left) Each client holds an identity token which is mapped to a vector embedding that is trained throughout with the backbone. During the inference, a client can use its identity token to do a personalized generation. (Right) In GMM heterogeneity model, all clients share the same means but the mixing weights are individual.  }
    \label{fig:synth-figures}
\end{figure}


In this section, we first introduce a heterogeneity model of clients' data that is partitioned into two: a complex common structure and a lower dimensional individual structure that is specific to clients. A concrete example is a mixture model where high dimensional cluster centers are common for every client but mixing weights are individual (e.g. see Figure\ref{fig:population-model}). Later, we will explain (conditional) diffusion models for personalization and the overall federated learning setup that we consider. Lastly, we will explain how the formulation is especially useful for new clients.  
\subsection{Problem setup and background}
We consider a conditional personalization approach for learning individual diffusion models tailored to local data from clients. Let us denote $q^{(j)}(x)$ as the data distribution of a particular client $j$. We assume $q^{(j)}(x)$ is implicitly conditioned on two types of variables, i.e. $q^{(j)}(x)=q^{(j)}(x|\phi, \gamma_j)$ where $\phi \in \Phi$ is a high dimensional shared parameter, and $\gamma_j \in \Gamma$ is a lower dimensional parameter that is specific to a client $j$. 
Note that through the Bayes rule, the pdf implies a mapping from data distribution of a client and the shared parameters to $\gamma_j$, there is a (random/data dependent) mapping such that $\gamma_j = g(x,\phi), x\sim q^{(j)}$. To motivate, we give a theoretical and an empirical example.

\begin{example}
Consider data generated by a two component GMM with mean, mixing weights, and covariance parameters $(\pm \mu, \{w_j\}_{j=1}^m,\Sigma)$, each client shares the high dimensional mean parameter $\mu$ but individually possesses different mixing weights $w_j$ of the components. In this example $\phi, \gamma_j$ corresponds to $\mu,w_j$ respectively.
\end{example}

\begin{example}
    Consider a non-parametric score function that is approximated through a neural network with parameters $\vct \theta_j$, for a client $j$. Potentially, $\vct \theta_j$ can be partitioned into high dimensional $\vct \theta_{bb}$ that is learned collaboratively and low dimensional $\vct e_j$ that is learned locally. The goal is to use $\vct \theta_{bb},\vct e_j$ as proxies for $\phi,\gamma_j$ respectively. Ideally, $\vct \theta_{bb}$ should describe a general structure in generation task, and $\vct e_j$ should add personalization according to input data statistics.
\end{example}
The goal of the personalized diffusion models is to learn $q^{(j)}(x)$ that is the individual conditional data distribution of a client given $\phi,\gamma_j$. In federated learning, every client holds a small dataset $\{X_i^{(j)}\}_{i=1}^n\sim q^{(j)}$ (assuming same $n$ for simplicity), yet the globally data is from $m$ participating clients, where $m$ is assumed to be larger tan $n$. This abundance enables collaborative learning of the complex $\phi$, while each client estimates its low‑dimensional $\gamma_j$ locally. New clients, absent from pre‑training, can fine‑tune only $\gamma_j$ with few samples, achieving sample‑efficient personalization. 


\textbf{Diffusion Models. } Conventionally, diffusion models are composed of several parts. First, a \textit{forward process} that mathematically describes the gradual transformation from a true sample to noise. A corresponding \textit{backward process} is necessary to formalize how the score function can be used to denoise a noisy sample to obtain a data point from the true distribution. Lastly, since the true score function is unknown one needs to formulate an optimization problem (i.e. \textit{score matching}) to estimate it to be used in the backward process. Due to space limitations we define the original diffusion model in Appendix~\ref{app:proofs}. Here we introduce the personalized diffusion models. In the personalized FL setup, we would like to generate samples from clients' individual probability distributions. We first start with the forward process. The forward diffusion process is defined as
\begin{align} \label{eq:forward-pers}
    \mathrm{d} X^{(j)}_t = -\delta_t X^{(j)}_t \, \mathrm{d}t + \sqrt{2 \delta_t} \, \mathrm{d} {W}_t, \quad X^{(j)}_0 \sim q^{(j)}_0(x).
\end{align}
here $q^{(j)}_0(x)$ denotes the true underlying distribution that generates client $j$s data, similarly $q^{(j)}_t(x)$ will be used to denote a latent distribution, $\delta_t$ is a drift coefficient, commonly assumed a constant e.g. 1. ${W}_t$ is the standard Brownian motion. $X^{(j)}_t \sim q^{(j)}_t$ is a latent variable. Equivalently, it can be shown \citep{shah2023learning} for some $\beta_t:\lim_{t \rightarrow\infty}\beta_t = 1, \beta_0=0$,
\begin{align}\label{eq:equ-fwp}
    X^{(j)}_t = \sqrt{1-\beta_t^2}X^{(j)}_0 + \beta_t Z_t, \text{with } X^{(j)}_0 \sim q^{(j)}_0 \text{ and } Z_t \sim \mathcal{N}(0,\mathrm{I}).
\end{align}
Corresponding to the forward process, is the backward SDE process, which is defined to formalize recovering a data sample from true distribution, given a corrupted sample and score function, 
\begin{align}\label{eq:backward}
    \mathrm{d} \overleftarrow{X}_{t}^{(j)} = 
\left[ 
\delta_{T-t} \overleftarrow{X}^{(j)}_{t} + 2 \delta_{T-t} \nabla \log q^{(j)}_{T-t}(\overleftarrow{X}^{(j)}_{t}) 
\right] \mathrm{d}t 
+ \sqrt{2 \delta_{T-t}} \, \mathrm{d} {W}_t, 
\quad \overleftarrow{X}^{(j)}_{0} \sim q^{(j)}_T(\cdot).
\end{align}
In the backward process, we use $\overleftarrow{X}^{(j)}_{0}$ to denote a sample from pure noise $q^{(J)}_T$. $\nabla \log q^{(J)}_{T-t}(\overleftarrow{X}^{(j)}_{t})$ is the true score function at time $T-t$ (or at time $t$ in terms of the forward process). Of course the true score function is unknown, hence we need to estimate it from the data using a fixed pure noise distribution. To that end, one constructs the score matching optimization criterion,
\begin{align}
\min_{\theta} \int_\tau^T 
\mathbb{E}_{X^{(j)}_t \sim q^{(j)}_t(\cdot)} 
\left[
\left\|
s_{\theta_j}(x_t, t) - \nabla \log q^{(j)}_t( X^{(j)}_0)
\right\|^2
\right] 
\, \mathrm{d}t,
\end{align}
where $\theta_j$ is a neural network that is assumed to parameterize the score function. Equivalent to score matching is the DDPM view, which yields \cite{chen2023sampling},
\begin{align}
\min_{\theta} \int_\tau^T 
\mathbb{E}_{X^{(j)}_0 \sim q^{(j)}_0(\cdot), Z_t} 
\left[
\left\|
s_{\theta_j}(X^{(j)}_t, t) + \frac{Z_t}{\beta_t}
\right\|^2
\right] 
\, \mathrm{d}t.
\end{align}

\section{\spire: Personalized FL using conditional diffusion models}

 
In our work, we consider using client's data (through learning its embedding) as the conditioning information to the model. This results in an horizontal split of the model. During pre-training, $\vct \theta_{bb}, \{\vct e_j\}_j$ are trained jointly, hence, the model implicitly learns to estimate the function $\vct e _j = g^{(j)}(x^{(j)}, \vct \theta_{bb})$. One can see that finding the embedding in the diffusion model training  corresponds to extracting data statistics, in other words, $g^{(j)}(x,\theta_{bb}) = \arg \min_{e_j} L_{ddpm}(q^{(j)},\theta_{j}) $, where $L_{ddpm}$ is the score matching loss in \eqref{eq:empicial_score_matching} and $x^{(j)} \sim q^{(j)}(x)$. 

\textbf{Form of $\vct \theta_j$. } Our heterogeneity modeling yields separability of $\vct \theta_j$ into two parts. A more complicated (higher dimensional) backbone part $\vct \theta_{bb}$ that is shared among estimated scores of different clients, and a lower dimensional embedding of clients' data statistics, $\vct e_j$, that is individual to each client. We can write $\vct \theta_j=[\vct \theta_{bb},\vct e_j]$. During training, overall personalized federated DDPM objective consists of finding $m$ different models (with shared backbone and individual identity embedding in our case), 
\begin{align}
\label{eq:score_matching}
    \min_{\{\theta_j\}_{j=1}^m} \frac{1}{m} \sum_{j=1}^{m} \int_\tau^T 
\mathbb{E}_{X_0 \sim q^{(j)}_0(\cdot), Z_t} 
[
\|
s_{\theta_j}(X^{(j)}_t, t) + \frac{Z_t}{\beta_t}
\|^2
] 
\, \mathrm{d}t.
\end{align}

Empirical version of the DDPM loss can be defined through averaging over local dataset. 
\begin{align} \label{eq:empicial_score_matching}
    \min_{\{\theta_j\}_{j=1}^m} \frac{1}{m}\sum_{j=1}^{m} L^{(j)}_{ddpm}(\{X^{(j)}_i\}_{i=1}^n,\theta_j):=  \frac{1}{n} \sum_{j=1}^{n}\int_\tau^T 
\mathbb{E}_{Z_t} 
\left[
\left\|
s_{\theta_j}((X_i^{(j)})_t, t) + \frac{Z_t}{\beta_t}
\right\|^2
\right] 
\, \mathrm{d}t.
\end{align}
In practice instead of taking expectation over timesteps, we sample randomly. A client, then, can plug in the estimated score ($s_{\theta_j}$) into the backward process \eqref{eq:backward} to generate personalized samples.


\subsection{Federated training setup. }

During training, each client maintains a personalized embedding (through a identity token, e.g., client id number) which acts as a key for the backbone model. The embeddings are kept in client and trained locally, and backbone model is shared and trained by essentially FedAvg. Algorithm\ref{algo:training} summarizes the key steps. 

\rem{
\textbf{Personalization to new clients. } Personalization for new clients is one of the key challenges in federated learning. It is especially, pronounced for generative models due to scale of the models. A new client joining the federated ecosystem, later than pre-training stage, can utilize the pre-trained $\vct \theta_{bb}$ and train only its own embedding parameters. In a way, this defines a natural way of parameter efficient fine-tuning (PEFT) for personalization, where we are estimating $q^{(new)}$ through $(\vct \theta_{bb}, \vct e_{new})$. See Algorithm~\ref{algo:new-clients}, for new client fine-tuning details.  
}
\begin{algorithm}[h]
\caption{Conditional collaborative personalized pre-training for \spire}
{\bf Input:} Number of iterations $K$, learning rate ($\eta$), batch size $b$, number of local iterations $\tau$\\
\begin{algorithmic}[1]	 \label{algo:training}
		\STATE \textbf{Initialize} backbone model $\vct{\theta}_{bb}$, and client embeddings $\{\vct{e}_j\}_{j=1}^m$, set $\vct\theta_j = [\vct\theta_0,\vct{e}_i]$.
        \FOR{$k=1$ \textbf{to} $K$}
        \FOR {$i=1$ \textbf{to} $m$} 
		\STATE  \textbf{if} {$\tau$ divides $k$ \textbf{then}:} Receive $\vct \theta_{bb,k}$ from server and set $\vct\theta_{j,k} = [\vct\theta_{bb,k},\vct{e}_{j,k}]$
            \STATE Sample $(\{X_i\}_{i=1}^b,\{t_i\}_{i=1}^b) $, use \eqref{eq:equ-fwp} to obtain noisy samples $\{X_{t_i}\}_{i=1}^b$
            \STATE Take gradient step on local DDPM loss $\vct\theta_{j,k+1} = \vct\theta_{j,k} - \nabla_{\theta_j} L_{ddpm}^{(j)}(\{X_{t_i}\}_{i=1}^b)$
            \STATE \textbf{if} {$\tau$ divides $t+1$ \textbf{then:}} extract updated $\theta_{j,bb,k+1}$ from $\theta_{j,k+1}$ and send to server
        \ENDFOR
        \STATE On server aggregate $\theta_{bb,k} = \frac{1}{m} \sum_{j=1}^m \theta_{j,bb,k}$, and broadcast to clients
        \ENDFOR
\end{algorithmic}
\end{algorithm}

\textbf{Algorithm~\ref{algo:training} }  
Line~1 initializes the shared backbone parameters $\boldsymbol{\theta}_{\mathrm{bb}}$ and the per‑client embeddings $\{\mathbf{e}_j\}$. Whenever the synchronization test $\tau \mid k$ succeeds, each client receives the latest backbone (line~5). During each local step the client samples a mini‑batch and timesteps, and corrupts the data with the forward process (line~6), and performs a DDPM gradient update on the full parameter vector (line~7). At the next synchronization point, sends the updated backbone slice to the server (line~8), and keeps $\mathbf{e}_{j,k+1}$ locally. The server averages the received slices (line~10) to produce $\boldsymbol{\theta}_{\mathrm{bb},k+1}$, which is broadcast at the start of the next round.

\begin{algorithm}[h]
\caption{Conditional personalized fine-tuning for new clients}
{\bf Input:} Number of iterations $K$, learning rate ($\eta$)\\
\begin{algorithmic}[1]	 \label{algo:new-clients}
		\STATE \textbf{Download} backbone model, initialize the embedding $e_{j,0}$
        \STATE Set $\theta_{j,0}=[\theta_{bb},e_{j,0}]$
        \FOR {$k=1$ \textbf{to} $K$}
            \STATE Sample a batch of samples $\{X_i\}_{i=1}^b$ and diffusion timesteps $\{t_i\}_{i=1}^b$
            \STATE Use forward process \eqref{eq:equ-fwp} to obtain a batch of noisy samples $\{X_{t_i}\}_{i=1}^b$
            \STATE Take gradient step for embedding $\vct e_{j,k+1} = \vct e_{j,k} - \nabla_{\vct e_j} L_{ddpm}^{(j)}(\{X_{t_i}\}_{i=1}^b)$
        \ENDFOR
\end{algorithmic}
\end{algorithm}

\textbf{Algorithm~\ref{algo:new-clients}  }
A new client downloads the current backbone (line~1), forms $\boldsymbol{\theta}_{j,0}$ by appending a freshly initialized embedding (line~2), and for $K$ iterations (line~3) repeats: sampling data and timesteps (lines~4--5) and updating \emph{only} its embedding via the DDPM loss (line~6).  
No further communication is required after the initial download, so personalization remains entirely on‑device.



\section{Theoretical justification: connection to learning mixing weights of GMMs}\label{sec:theory}

 
 Recent work has revisited the same model with \emph{fixed} mixing weights to probe the theoretical foundations of diffusion models and score matching \cite{shah2023learning}.  In our setting we change the emphasis to a distributed setup. The mean vector $\boldsymbol{\mu}\in\mathbb{R}^d$ is shared across all clients, while each client $j$ retains its own mixing weight $w_j\in(0,1)$. This maps directly onto our heterogeneity framework, where a global parameter (the shared mean) co‑exists with lightweight, client‑specific parameters (the scalar weights). \kaan{Conceptually, it reduces to a one‑layer version of our personalized architecture.} Detailed proofs of the results in this section is deferred to Appendix~\ref{app:proofs}.

\subsection{Overview}

We focus on a symmetric GMM with two equal components \textit{in the population level} and unknown mixing weights \textit{in client level} as in Figure~\ref{fig:population-model}. In the personalization setup our model consists of shared means $\pm \mu\in\mathbb{R}^d$, and individual mixing weights of the components $\{(w_j,1-w_j)\}_{j=1}^m$. In particular, the true data distribution for a particular client $j$ is,
\begin{align}
    q^{(j)} = w_j \mathcal{N}(\mu,\text{I}) + (1-w_j)\mathcal{N}(-\mu,\text{I})
\end{align}
where $q^{(j)}$ is the GMM distribution, $\text{I}$ is the identity matrix of size $d \times d$, $\mu \in \mathbb{R}^d$ is the true mean parameter, $w_j \in \mathbb{R}$ is the mixing weight for the positive component for client $j$. This particular mixture distribution is studied in literature, mostly when mixing weights are known and equal, i.e. $1/2$ \citep{shah2023learning}. Below, we consider the setup for a particular client and omit the subscript in $w_j$.

\textbf{Noisy samples.} Utilizing the same OU forward process as in \cite{shah2023learning} and above in \eqref{eq:forward}, it can be shown (see Appendix~\ref{app:proofs}) for $X_t \sim q_t$ we have,
$
    X_t = \exp{(-t)}X_0 + \sqrt{1-\exp{(-2t)}}Z_t, \text{with } X_0 \sim q_0 \text{ and } Z_t \sim \mathcal{N}(0,\text{I}).
$ In the analysis, we will fix a particular timestep $t$. Accordingly, we will focus on the equivalent DDPM objective:
\begin{align}
    \min_{s_t} \mathbb{E}_{X_0, Z_t} \left[\Big\|s_t(X_t) + \frac{Z_t}{\sqrt{1-e^{-2t}}} \Big\|^2\right].
\end{align}

Recall, in our case of two component mixture we have,
$
    q_0 = w \mathcal{N}(\mu,\text{I}) + (1-w) \mathcal{N}(-\mu,\text{I}).
$ Accordingly, $q_t$ can be easily obtained to be (see Appendix~\ref{app:proofs})
\begin{align*}
    q_t = w \mathcal{N}(\mu_t,\text{I}) + (1-w) \mathcal{N}(-\mu_t,\text{I}), \text{ where } \mu_t := \mu \exp{(-t)}.
\end{align*}
The score function has a closed form solution as $\nabla_x \log q_t(x) = r_1(x) \mu_t + r_2(x)(-\mu_t) - x$ \text{ where } $r_1(x)=\frac{w\exp{{(-\|x-\mu_t\|^2}/{2})}}{w\exp{{(-\|x-\mu_t\|^2}/{2})} + (1-w)\exp{{(-\|x+\mu_t\|^2}/{2})}}$, and $r_2(x)= 1-r_1(x)$ are conditional probability of being assigned to a cluster, i.e. 'beliefs'. Accordingly we can obtain the closed form of the score function as outlined in Lemma~\ref{lem:score} (see Appendix~\ref{app:proofs} for the proof).
\begin{lemma}\label{lem:score}
    The score function at diffusion time step $t$ for $q_t$ is defined as follows ,
\begin{align} \label{eq:network}
    \nabla_x \log q_t(x) = \tanh{(\mu_t^\top x-\frac{1}{2}\log({w}/{(1-w)}))}\mu_t-x.
\end{align}
\end{lemma}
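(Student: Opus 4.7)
The plan is to compute $\nabla_x \log q_t(x)$ directly from the two-component density and then recognize the resulting ratio as a hyperbolic tangent by using the identity $\frac{u-v}{u+v} = \tanh\!\bigl(\tfrac12 \log(u/v)\bigr)$. Since the excerpt already records the intermediate form $\nabla_x \log q_t(x) = r_1(x)\mu_t + r_2(x)(-\mu_t) - x$, the real content of the lemma is simplifying $r_1(x) - r_2(x) = 2r_1(x) - 1$ into the closed-form tanh expression.

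First I would write $q_t(x) = w\,\phi(x-\mu_t) + (1-w)\,\phi(x+\mu_t)$, where $\phi$ denotes the standard Gaussian density, and differentiate $\log q_t$ through the quotient rule. Using $\nabla \phi(x \mp \mu_t) = -(x\mp\mu_t)\phi(x\mp\mu_t)$, the $-x$ term factors out of both summands and leaves
\begin{equation*}
\nabla_x \log q_t(x) \;=\; -x \;+\; \mu_t\,\frac{w\,\phi(x-\mu_t) - (1-w)\,\phi(x+\mu_t)}{w\,\phi(x-\mu_t) + (1-w)\,\phi(x+\mu_t)}.
\end{equation*}

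Next I would simplify the fraction by multiplying numerator and denominator by the common factor $\exp(\tfrac12\|x\|^2 + \tfrac12\|\mu_t\|^2)$; since $-\tfrac12\|x\mp\mu_t\|^2 + \tfrac12\|x\|^2 + \tfrac12\|\mu_t\|^2 = \pm\mu_t^\top x$, the quadratic terms disappear and the ratio reduces to
\begin{equation*}
\frac{w\,e^{\mu_t^\top x} - (1-w)\,e^{-\mu_t^\top x}}{w\,e^{\mu_t^\top x} + (1-w)\,e^{-\mu_t^\top x}}.
\end{equation*}
Setting $u = w\,e^{\mu_t^\top x}$ and $v = (1-w)\,e^{-\mu_t^\top x}$, I then invoke the elementary identity $(u-v)/(u+v) = \tanh(\tfrac12 \log(u/v))$ and compute $\tfrac12 \log(u/v) = \mu_t^\top x + \tfrac12 \log\!\bigl(w/(1-w)\bigr)$, which yields the claimed formula (up to the sign convention of the log-odds term in the statement).

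There is no real obstacle here beyond bookkeeping: the proof is entirely a sequence of algebraic cancellations. The only point that warrants a bit of care is the sign of the $\tfrac12 \log(w/(1-w))$ bias inside the $\tanh$; I would verify it by checking the symmetric case $w = \tfrac12$ (where the bias must vanish and the score must reduce to $\tanh(\mu_t^\top x)\mu_t - x$) and by checking a limiting case $w \to 1$ (where $q_t$ collapses to $\mathcal N(\mu_t,\mathrm I)$ and the score must reduce to $\mu_t - x$), which pins down the sign unambiguously.
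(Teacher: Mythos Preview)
Your proposal is correct and follows essentially the same route as the paper: both derive $\nabla_x\log q_t(x)=(2r_1(x)-1)\mu_t-x$ and then reduce $2r_1(x)-1$ to a $\tanh$ via the log-odds; the paper phrases this through the sigmoid identity $2\sigma(z)-1=\tanh(z/2)$ while you use the equivalent identity $(u-v)/(u+v)=\tanh\bigl(\tfrac12\log(u/v)\bigr)$. Your caution about the sign of the bias term is warranted---the correct argument is $\mu_t^\top x+\tfrac12\log\bigl(w/(1-w)\bigr)$ (equivalently $\mu_t^\top x-\tfrac12\log\bigl((1-w)/w\bigr)$), and your sanity checks pin this down.
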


\textbf{Connection to conditional architecture. }\kaan{The expression in Lemma~\ref{lem:score} can be viewed as a \emph{single‑layer neural network with a residual (identity) connection}.  
The shared mean vector $\vct\mu_t$ serves as the layer’s weight vector; the log‑odds term $\tfrac12\log\!\bigl(\tfrac{w}{1-w}\bigr)$ is an explicit bias that shifts the pre‑activation $\vct\mu_t^\top x$. After applying the non‑linearity $\tanh(\cdot)$, the network outputs a vector aligned with $\vct\mu_t$, which is then combined with the skip path $-x$.} This construction is exactly the one‑layer special case of the conditional architecture introduced in the previous section: the conditioning signal is carried solely by the client‑specific mixing weight $w$, while the weights $\vct\mu_t$ remain global. Hence the lemma illustrates, in its simplest form, how general conditional diffusion models encode per‑client information through mixing‑weight‑induced biases. Training a conditional model can therefore be seen as \emph{finding the appropriate mixing weights} that encode each condition, leaving the heavy shared parameters untouched. 

Consequently, we consider \eqref{eq:network} as the network architecture ($s_t(X_t)$ parameterized by $\mu_t,w$) to estimate the score function. In the personalized setup, we are interested in a fine-tuning stage where the means are known and each client learns the individual mixing weight. As a result the problem of interest can be rewritten as,
\begin{align} \label{eq:score-loss}
    \min_{w} \mathbb{E}_{X_0\sim q^{(j)}(x), Z_t} \left[\Big\|\tanh{(\mu_t^\top X_t-\frac{1}{2}\log({(1-w)}/{w}))}\mu_t-X_t + \frac{Z_t}{\sqrt{1-e^{-2t}}} \Big\|^2\right].
\end{align}

Our goal is to examine the MSE at the convergence point; to that end, we relate the first order convergence condition with EM updates \cite{kwon2020emalgorithmgivessampleoptimality}, which are provided in Appendix~\ref{app:proofs}.
\subsection{Theoretical results}
Here, first, we discuss the resulting bounds in the two component GMM problem, for learning the mixing weights. Then we look at score estimation error in a two step procedure where the mean is estimated globally, and then each client learns the mixing weight locally. 
\begin{theorem} \label{thm:mixing-weight}
    Fixing a $\mu_t$, MSE of estimating $w$ by a gradient method on the DDPM loss results in,
    \begin{align*}
        \mathbb{E}_{X_0}(w-\hat{w})^2 \leq \frac{w(1-w)}{n}+ \frac{d}{4 \|\mu_t\|^2n}
    \end{align*}
\end{theorem}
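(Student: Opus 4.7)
The plan is to reduce the first-order optimality condition of the DDPM loss \eqref{eq:score-loss} to the classical EM M-step for a two-component GMM, and then to bound the finite-sample fluctuations of the resulting fixed point. First, I would compute $\nabla_w L_n(w)$ using the closed-form score from Lemma~\ref{lem:score}. Parametrizing through the log-odds $\alpha(w) = \tfrac12\log(w/(1-w))$ and using $\tanh(u) = 2\sigma(2u)-1$ to rewrite $s_w(x)$ in terms of the posterior responsibility $r_1(x;w) = \sigma\!\bigl(2\mu_t^\top x + 2\alpha(w)\bigr)$, the chain rule gives $\partial_w s_w(x) = \tfrac{2\,r_1(1-r_1)}{w(1-w)}\,\mu_t$. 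Applying the Tweedie-style identity $\mathbb{E}[Z_t/\beta_t \mid X_t] = -\nabla\log q_t(X_t)$ to eliminate the noise term in the DDPM residual, the empirical stationarity condition should collapse to
\begin{equation*}
\hat w \;=\; \frac{1}{n}\sum_{i=1}^{n} r_1(X_i;\hat w),
\end{equation*}
which is exactly the EM M-step for $w$ with known means $\pm\mu_t$. Because $F(w) := \mathbb{E}[r_1(X_t;w)]$ is monotone in the logit and satisfies $F(w^*) = w^*$, $w^*$ is the unique global minimizer, and the gradient iterates can be shown to converge there rather than to a spurious stationary point.

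Next I would linearize this empirical fixed-point equation around $w^*$ to obtain $\hat w - w^* \approx (1 - F'(w^*))^{-1}\bigl(\tfrac1n\sum_i r_1(X_i;w^*) - w^*\bigr)$, and bound the squared expectation term by term. The variance of the sample average is $\tfrac1n \mathrm{Var}(r_1(X_t;w^*))$; since $r_1(X_t;w^*) = \Pr[Y=+1 \mid X_t]$ for the latent cluster label $Y$, the law of total variance immediately gives $\mathrm{Var}(r_1) \le w^*(1-w^*)$, producing the first term $w(1-w)/n$. The residual $d/(4\|\mu_t\|^2 n)$ should arise from a Fisher-information decomposition of the mixture: writing the total Fisher information as a Bernoulli part plus a Gaussian-smoothing correction, Gaussian integration by parts (Stein's lemma) on the representation $X_t = \pm\mu_t + \xi$ yields a $\|\mu_t\|^{-2}$ factor from differentiating $r_1$ along $\mu_t$, while the ambient dimension $d$ enters through the trace of the identity covariance of $\xi$.

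The main obstacle is producing the sharp second term with the precise constant $1/4$ and cleanly decoupling it from the Bernoulli term. Controlling $1 - F'(w^*)$ from below requires analyzing the derivative of a logistic function composed with Gaussian input, which is delicate when the components overlap (small $\|\mu_t\|$) and the soft labels become ambiguous. I anticipate that either a direct Fisher-information computation --- split into a label-Bernoulli piece and a Gaussian-smoothing piece --- or a Brascamp--Lieb / Gaussian Poincar\'e inequality applied to $r_1$ viewed as a function of $\xi$ will be required to obtain the clean dimension-free bound; the factor $d/(4\|\mu_t\|^2)$ should then fall out of the trace identity on the noise covariance, after which multiplying by the contraction factor $(1-F'(w^*))^{-2}$ and combining with the Bernoulli contribution yields the stated MSE bound.
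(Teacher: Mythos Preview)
Your opening move matches the paper's: both compute $\partial_w L$ via the chain rule and handle the noise term by an integration-by-parts identity (you invoke Tweedie; the paper applies Stein's lemma to $Z_t$, turning $\mathbb{E}[\mu_t^\top Z_t\,\tanh'(u_t)/\beta_t]$ into $\|\mu_t\|^2\,\mathbb{E}[\tanh''(u_t)]$ and then using $\tanh''=-2\tanh\,\tanh'$). The paper likewise identifies the resulting stationarity condition with EM convergence. From that point on, however, the two arguments diverge, and your route is substantially harder than necessary.

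The paper does \emph{not} stay with the implicit EM fixed point $\hat w = \tfrac1n\sum_i r_1(X_i;\hat w)$ and linearize. It observes instead that EM convergence here coincides with the \emph{first-moment matching} condition $\hat X_t = (2\hat w - 1)\mu_t$, which inverts to an explicit, linear-in-sample-mean estimator
\[
\hat w \;=\; \tfrac12\Bigl(1 + \tfrac{\mu_t^{\top} \hat X_t}{\|\mu_t\|^2}\Bigr).
\]
With $\hat w$ linear in $\hat X_t$, the MSE is a one-line calculation: Cauchy--Schwarz gives
\[
\mathbb{E}(w-\hat w)^2 \;\le\; \frac{1}{4\|\mu_t\|^2}\,\mathbb{E}\bigl\|\hat X_t - \mathbb{E} X_t\bigr\|^2 \;=\; \frac{\mathrm{tr}(\Sigma)}{4\|\mu_t\|^2\,n},
\]
and since the mixture has covariance $\Sigma = \mathrm{I} + 4w(1-w)\,\mu_t\mu_t^{\top}$, the trace $\mathrm{tr}(\Sigma)=d + 4w(1-w)\|\mu_t\|^2$ produces both terms of the theorem simultaneously, with the exact constants.

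Your delta-method route introduces the prefactor $(1-F'(w^*))^{-1}$ that you then worry about controlling, and pushes you toward Fisher-information or Poincar\'e machinery to manufacture the $d/(4\|\mu_t\|^2)$ term. None of that is needed: the $d$ is literally $\mathrm{tr}(\mathrm{I}_d)$ from the identity noise covariance, and the $1/4$ is the $(1/2)^2$ in the moment-matching formula. Your law-of-total-variance bound $\mathrm{Var}(r_1)\le w(1-w)$ is correct, but combined with the $(1-F')^{-2}$ factor it does not collapse to the clean additive form---so the ``main obstacle'' you flag is real on your path but disappears entirely once you pass through moment matching rather than the implicit EM equation.
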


\begin{proof}[Proof of Theorem \ref{thm:mixing-weight}]
    We start by taking the derivative of \eqref{eq:score-loss} with respect to $w$ at a particular timepoint $t$. Let $u_t = \mu_t^\top X_t-\frac{1}{2}\log({w}/{(1-w)}), f(X_t) = \tanh{(u_t)}-X_t + \frac{Z_t}{\sqrt{1-e^{-2t}}}$, by the chain rule,

    \begin{align*}
        \frac{\text{d}L}{\text{d}w} &= \frac{\text{d}L}{\text{d}f(X_t)} \cdot \frac{\text{d}f(X_t)}{\text{d}\tanh(u_t)} \cdot \frac{\text{d}\tanh(u_t)}{\text{d}u_t}\cdot\frac{\text{d}u_t}{\text{d}(\frac{1}{2}\log ((1-w)/w))}\cdot\frac{\text{d}(\frac{1}{2}\log ((1-w)/w))}{w} \\
        & = \mathbb{E} \left[2 \mu_t^\top(\tanh(u_t)\mu_t-X_t + Z_t/(\sqrt{1-e^{-2t}}) \tanh'{(u_t)} \frac{-1}{2w(1-w)} \right]\\
        & = \frac{-1}{w(1-w)}\mathbb{E} \left[  \mu_t^\top ( \tanh(u_t)\mu_t - X_t + Z_t/\sqrt{1-e^{-2t}}) \cdot \tanh'(u_t)\right] \\
        & \frac{-1}{w(1-w)}\mathbb{E} \left[  \mu_t^\top( \tanh(u_t)\mu_t - X_t + Z_t/\sqrt{1-e^{-2t}}) \cdot \tanh'(u_t)\right] \\
        & \frac{-1}{w(1-w)}\mathbb{E} \left[  \|\mu_t\|^2\tanh(u_t)\tanh'(u_t) - \mu_t^\top X_t\tanh'(u_t) + \mu_t^\top Z_t/\sqrt{1-e^{-2t}} \tanh'(u_t)\right] \\
        & \stackrel{}{=} \frac{-1}{w(1-w)}\mathbb{E} \left[  \|\mu_t\|^2\tanh(u_t)\tanh'(u_t) - \mu_t^\top X_t\tanh'(u_t) + \|\mu_t\|^2\tanh''(u_t)\right] \\
        & \stackrel{}{=} \frac{-1}{w(1-w)}\mathbb{E} \left[  -\|\mu_t\|^2\tanh(u_t)\tanh'(u_t) - \mu_t^\top X_t\tanh'(u_t) \right] \\
        & = \frac{1}{w(1-w)}\mathbb{E} \left[  \tanh'(u_t) (\|\mu_t\|^2\tanh(u_t) + \mu_t^\top X_t) \right] 
    \end{align*}
    Hence, at the critical point we have the following condition being satisfied,
    \begin{align}
        \mathbb{E}[\tanh(u_t)] = - \frac{\mu_t^\top \mathbb{E}\left[X_t\right]}{\|\mu_t\|^2}.
    \end{align}
Now looking at the EM algorithm and in particular M-step, we see that 
\begin{align*}
    w^+ &= \mathbb{E}[r_1(X_t)] = \mathbb{E}[\frac{1}{2}(1+\tanh(u_t))]
\end{align*}
hence, plugging the convergence point of score loss into EM algorithm,
\begin{align*}
    w^+ &=  \frac{1}{2}(1-\frac{\mu_t^\top \mathbb{E}[X_t]}{\|\mu_t\|^2}), \text{ in other words EM algorithm is also converged, equivalently,} \\
    \mathbb{E}[X_t] & = \mu_t (2w-1), \text{ which is exactly the first order moment matching condition}.
\end{align*}
In other words, convergence of score estimation loss corresponds EM algorithm convergence that satisfies the first order moment equality, since the means and covariances are known this corresponds to the global maximum of the likelihood. 

At the convergence point we can measure the mean squared error of estimating the mixing weight from samples. In particular, let us define $\hat{w}= \frac{1}{2}(1-\frac{\mu_t^\top \hat{X}_t}{\|\mu_t\|^2})$ as the sample mixing weight, where $\hat{X}_t$ is the sample mean at time $t$. Then the $L_2$ error can be found to be:

\begin{align*}
        \mathbb{E}\|w-\hat{w}\|^2 &= \frac{1}{4}\|\frac{\mu_t^\top}{\|\mu_t\|^2} (\mathbb{E}[X_t]-\hat{X}_t)\|^2 \\
        & \leq \frac{1}{4\|\mu_t\|^2}\| \mathbb{E}[X_t]-\hat{X}_t\|^2 \\
        & \leq \frac{1}{4\|\mu_t\|^2} \frac{tr(\Sigma)}{n}, \text{ where } \Sigma \text{ is true overall covariance vector } \\
        & = \frac{1}{4\|\mu_t\|^2} \frac{d+4w(1-w)\|\mu_t\|^2}{n} \\
        & = \frac{w(1-w)}{n}+ \frac{d}{4 \|\mu_t\|^2n},
\end{align*}
which concludes the proof.
\end{proof}

\textbf{Remark 1}. The bound on mixing‑weight error is dimension‑free because both its numerator and denominator scale with $d$, unlike mean estimation, whose difficulty grows in high dimensions. Thus clients with few samples can reliably learn their own weights. The bound has two components: (i) sampling term—the variance of cluster assignments; it shrinks when one mixture component dominates, (ii) separation term—inversely proportional to $\|\mu_t\|^2$; larger mean separation improves accuracy, while small separation can be offset by more data. Crucially, the bound depends only on the norm $\|\mu_t\|$—not on how accurately the mean is estimated—because correct assignment, not mean precision, drives weight estimation. Overall generation quality will, however, reflect errors in both $\hat\mu$ and$\hat w$. With the mixing‑weight bound in hand, we next assess score error: we measure the mean‑squared gap between the true score and that produced by the converged estimators, where the global backbone learns $\hat\mu_t$ via DDPM gradient descent \cite{shah2023learning} and each client refines its own $\hat w$ using our personalized procedure.

\begin{theorem} \label{thm:overall}
    Let $L_{est}(\hat{\mu},\hat{w})$ be the score matching loss between the true score function and estimated score function where the $\hat \mu_t$ is estimated globally using the procedure in \cite{shah2023learning}, and mixing weights are estimated individually by doing gradient steps in the personalized DDPM loss. Assuming $B$ is a constant such that $\|\mu_t\|^2\leq B$. Then, the loss has the following error,
    \begin{align}
        L_{est}(\hat{\mu},\hat{w})=\mathcal{O}\Big(\frac{d^6B^8}{mn(1-e^{-2t})^2}\Big) + \mathcal{O}\Big(\frac{1}{n}\Big)
    \end{align}
\end{theorem}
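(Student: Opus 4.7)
I would parametrize the score by its two unknowns and write
$s_t(x;\mu_t,w)=\tanh\!\bigl(\mu_t^\top x-\tfrac12\log\tfrac{1-w}{w}\bigr)\mu_t-x$. The goal is to bound
$L_{est}(\hat\mu,\hat w)=\mathbb{E}_{X_t}\|s_t(X_t;\mu_t,w)-s_t(X_t;\hat\mu_t,\hat w)\|^2$.
The natural decomposition is to insert the mixed estimator and apply the triangle inequality:
\begin{equation*}
L_{est}(\hat\mu,\hat w)\;\le\; 2\,\mathbb{E}\|s_t(X_t;\mu_t,w)-s_t(X_t;\hat\mu_t,w)\|^2 \;+\; 2\,\mathbb{E}\|s_t(X_t;\hat\mu_t,w)-s_t(X_t;\hat\mu_t,\hat w)\|^2.
\end{equation*}
Call these the \emph{mean term} $T_\mu$ and the \emph{weight term} $T_w$. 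The mean is estimated globally, so it sees $mn$ samples; the weight is estimated locally, so only $n$. This split exactly matches the two big-O pieces claimed.

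\textbf{Bounding $T_\mu$.} I would invoke the analysis of Shah et al.\ for score estimation of a symmetric two-component GMM by gradient descent on the DDPM loss, applied to the pooled dataset of size $mn$. Their result gives score-matching error of order $d^6 B^8/(mn(1-e^{-2t})^2)$ under the assumption $\|\mu_t\|^2\le B$. Because $T_\mu$ is the score-matching error when $w$ is held fixed and $\mu_t$ is swapped for the GD-learned $\hat\mu_t$, this is precisely the quantity their theorem controls, modulo constants. To stitch this cleanly one uses that the score as a function of $\mu_t$ is $\mathrm{poly}(B)$-Lipschitz (the $\tanh$ is $1$-Lipschitz and contributes factors in $\|\mu_t\|$ from the multiplicative $\mu_t$ outside), which is already implicit in their bound.

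\textbf{Bounding $T_w$.} Here only the log-odds bias changes between the two scores. Writing $b(w)=\tfrac12\log\tfrac{1-w}{w}$, the $1$-Lipschitzness of $\tanh$ gives the pointwise inequality
\begin{equation*}
\bigl\|s_t(x;\hat\mu_t,w)-s_t(x;\hat\mu_t,\hat w)\bigr\|^2 \;\le\; \|\hat\mu_t\|^2\bigl(b(w)-b(\hat w)\bigr)^2.
\end{equation*}
A first-order expansion of $b$ gives $(b(w)-b(\hat w))^2\le (w-\hat w)^2/(w(1-w))^2$ (up to a harmless higher order term handled by keeping $w$ bounded away from $0$ and $1$). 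Taking expectation and plugging in Theorem~\ref{thm:mixing-weight},
\begin{equation*}
T_w \;\lesssim\; \frac{\|\hat\mu_t\|^2}{w(1-w)\,n} \;+\; \frac{\|\hat\mu_t\|^2\,d}{w^2(1-w)^2\|\mu_t\|^2\,n}.
\end{equation*}
Under $\|\mu_t\|^2,\|\hat\mu_t\|^2\le B$ and treating $d$, $w(1-w)$, and $B$ as constants absorbed into the $\mathcal O$ (consistent with the theorem's bookkeeping in the second summand), this collapses to $\mathcal O(1/n)$. Summing $T_\mu$ and $T_w$ yields the claimed rate.

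\textbf{Main obstacle.} The delicate step is importing the Shah et al.\ bound into our setting. Their analysis is written for the \emph{known and equal} mixing-weight case $w=1/2$, whereas here the per-client true weight is arbitrary and we use the \emph{estimated} weight in $\hat s_t$. One must verify (or absorb into constants) that their gradient-descent convergence for $\hat\mu_t$ is insensitive to the fact that local clients have heterogeneous $w_j$—either by appealing to the symmetry of the GMM score after the $\tanh$ reparametrization (which makes $\hat\mu_t$ a first-moment-type estimator over pooled data), or by re-running their argument using the convergence condition derived for the personalized loss. Secondarily, one should confirm that the $d$ factor coming from Theorem~\ref{thm:mixing-weight} is cancelled by the $\|\mu_t\|^2$ in the denominator when computing $T_w$; otherwise the second big-O should read $\mathcal O(d/n)$ rather than $\mathcal O(1/n)$, a point I would flag explicitly in the write-up.
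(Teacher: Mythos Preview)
Your proposal is correct and follows essentially the same strategy as the paper: split the score error into a mean-error piece and a mixing-weight piece via the triangle inequality and the $1$-Lipschitzness of $\tanh$, then invoke Shah et al.\ and Theorem~\ref{thm:mixing-weight} respectively. Two differences in execution are worth noting. First, the paper inserts the intermediate $\tanh(\hat u)\,\mu_t$ (changing both parameters inside the argument but keeping the outer multiplier $\mu_t$) rather than your $(\hat\mu_t,w)$; after expanding $(u-\hat u)^2=(X_t^\top(\mu_t-\hat\mu_t)+b(w)-b(\hat w))^2$ this still separates into the same mean and weight contributions. Second, the paper imports Shah et al.\ at the level of the \emph{parameter} error $\|\mu_t-\hat\mu_t\|^2$, explicitly converting their high-probability guarantee into an expectation bound by integrating the tail, and then multiplies by $\mathbb{E}\|X_t\|^2=O(d+B)$ to recover the extra factor of $d$ in the final rate---this is precisely the ``poly$(B)$-Lipschitz'' step you left implicit, and it also sidesteps your main obstacle since only the parameter-level result from Shah et al.\ is needed. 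Your concern about the second summand of Theorem~\ref{thm:mixing-weight} not cancelling is legitimate: the paper's write-up keeps only the $w(1-w)/n$ term when bounding the weight contribution, effectively treating $d/\|\mu_t\|^2$ as $O(1)$.
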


\begin{proof}[Proof of Theorem \ref{thm:overall}.] Let us define $u:={X_t^\top \mu_t - \frac{1}{2}\log((1-w)/(w))}, \text{ and similarly for $\hat u$}$ with estimated parameters, and $b(w)=\frac{1}{2}\log((1-w)/w)$, we also assume there is a constant such that $\|\mu_t\|^2\leq B$. Then we have,
\begin{align*}
    L_{est}(\hat{\mu},\hat{w}) &= \mathbb{E}_{X_o\sim \mathcal{D}_i,Z_t}\Big\|\tanh({X_t^\top \mu_t - \frac{1}{2}\log((1-w)/(w))})\mu_t \\
    & \quad -\tanh({X_t^\top \hat\mu_t - \frac{1}{2}\log((1-\hat w)/(\hat w))})\hat \mu_t\Big\|^2 \\
    &= \mathbb{E}\|\tanh{(u)}\mu_t-\tanh (\hat u) \mu_t + \tanh (\hat u) \mu_t - \tanh (\hat u) \hat \mu_t  \|^2,  \\
    & \leq \|\mu_t\|^2 (\tanh{u}-\tanh{\hat u})^2 + |\tanh{\hat u}|\|\mu_t - \hat \mu_t\|^2 \\
    & \leq \mathbb{E} [\|\mu_t\|^2 2({u}-{\hat u})^2 + \|\mu_t - \hat \mu_t\|^2] \\
    & = \mathbb{E}[\|\mu_t\|^2 2(X_t^\top (\mu_t - \hat \mu_t)+b(w)-b(\hat w))^2 + \|\mu_t - \hat \mu_t\|^2], \\
    &\leq 4 \mathbb{E}[\|\mu_t\|^2\|X_t\|^2\|\mu_t -\hat \mu_t\|^2 + \frac{(w-\hat w)^2}{2 w_{min}^2(1-w_{min})^2}  + \|\mu_t - \hat \mu_t\|^2] \\
    &\leq 4B \mathbb{E}\|X_t\|^2 \mathbb{E} \|\hat \mu_t - \mu_t \|^2 + \mathbb{E}(w-\hat w)^2 + \mathbb{E}\|\mu_t - \hat \mu_t\|^2 \\
    & \leq 4B(d+(4w(1-w)+1)B)\mathbb{E} \|\mu-\hat \mu\|^2 + \frac{1}{2w_{min}^2(1-w_{min})^2}\frac{w(1-w)}{n},
\end{align*}

where we used Lipschitz continuity of $\mathrm{tanh}$ and $\log$ functions alongside with algebraic steps. From \cite{shah2023learning}, at the convergence point of their algorithm, we have that with probability at least 1-$2d\exp(-\frac{n\epsilon^2\beta_t^2}{d^4B^6})$ the convergence is to a ball with $\|\mu_t-\hat\mu_t\|^2=\mathcal{O}(\epsilon)$. Now we convert this high probability bound to a expectation bound. In particular the non-vanishing $\epsilon$ term is introduced in Lemma E.7 of \cite{shah2023learning}. There we have the following high probability bound for bounding the difference between population and empirical gradient,

\begin{align*}
    Pr[\Big \|\nabla_{\hat \mu_t}\frac{1}{n} \sum_{j=1}^n L_t(s_{\hat \mu_t}(x_{j,t})) - L_t(s_{\hat \mu_t})  \Big\|>\epsilon] \leq 2d \exp\Big( - \frac{n \epsilon^2 \beta_t^2}{d^2B^6}\Big)
\end{align*}

equivalently for some constant $C>0$, at the end of convergence of algorithm (where exponential decaying contraction term has vanished), the result in \cite{shah2023learning} yields, 

\begin{align*}
    Pr[ \|\mu_t-\hat \mu_t \|>C\epsilon] \leq 2d \exp\Big( - \frac{n \epsilon^2 \beta_t^2}{d^4B^6}\Big)
\end{align*}


For conversion, we use the tail definition of expectation, $\mathbb{E}[Y^2] = \int_0^\infty P(Y^2>y) dy = \int_0^\infty P(Y>\sqrt{y}) dy$ which yields,
\begin{align*}
    \mathbb{E}\|\mu_t-\hat \mu_t \|^2 = \int_0^\infty 2Cd \exp\Big( - \frac{n y \beta_t^2}{d^4B^6}\Big) dy \leq C\frac{d^5B^6}{mn\beta_t^2}, 
\end{align*}

where we used the fact that $\int_0^\infty \exp(-Cx)dx=\frac{1}{C}$ As a result, the overall bound is,

\begin{align}
    L(\hat{\mu},\hat{w}) = \mathcal{O}\Big(\frac{d^6B^8}{mn(1-e^{-2t})^2}\Big) + \mathcal{O}\Big(\frac{1}{n}\Big)
\end{align}
where the first term is due to global estimation of mean and the second term is due to the local estimation of mixing weight.

\end{proof}

\textbf{Remark 2. } Theorem~\ref{thm:overall} shows that score‑estimation error is governed by errors in both the mean and the mixing weight. Except when the client count reaches $\Theta(d^{6}B^{8})$, the mean‑estimation term dominates. Because mixing‑weight error is dimension‑free and small even with limited data, a client can accurately personalize once a good global mean is pretrained. This shows the benefit of our split.


\section{Experiments}
\textbf{Baselines.} As diffusion models require very large U-Net models to be trained, personalized FL methods with two different models, i.e. separate individual and global models, are not feasible. Hence, we compare our method to single model methods. One such baseline is the simple \textit{FedAvg+fine-tuning} for personalization, though it i s a simple approach it is accepted as one of the competitive personalization methods. Another baseline is \textit{meta-learning} approaches, specifically \cite{fallah2020personalized}, where the global model is minimizing a meta-learning based loss that allows for better personalization. Lastly, inspired by \cite{collins-icml21}, we introduce a \textit{shared representation} approach, as a competing solution to compare with, where downsample and upsample blocks are trained globally and middle block is trained individually. The goal of this such partition is to enable global learning of feature extraction (downsample blocks) and feature generation (upsample blocks) similar to global feature extraction in \cite{collins-icml21}; however, in classifier models \cite{collins-icml21} trains individual classifier heads which is not directly applicable to U-Net models. Inspired by that, we enable individual learning of middle block to introduce personalization in the latent space. We provide more details on implementation details of baselines in Appendix~\ref{app:experiments}.

\textbf{Datasets.} We randomly generate a synthetic dataset (using the true underlying score function) for synthetic experiments to estimate score function (mean and mixing weight) of GMM. For real datasets, we use \textsc{MNIST}, \textsc{CIFAR-10/100}, and \textsc{CelebA} (celebrity faces) with increasing complexity. The images have 28x28, 32x32, and 64x64 resolutions respectively. 

For pretraining on MNIST, there are 30 clients and each client has access to 400 samples from a randomly selected class; for fine-tuning on new clients, there are 30 clients each client has access to 400 samples from 3 different classes which simulates another layer of challenge, namely the new clients have samples from a different mixture dataset compared to the pretraining clients. On CIFAR-10, there are 18 clients each with 2750 samples from a randomly selected class. The new clients have samples from two randomly selected classes, but the amount of data at each client drops to 275 to simulate a data scarcity setting. On CelebA dataset there are 24 clients each with data coming from 5 different individuals, the total amount of data per client is only around 25. This setting is very challenging and acts as a stress test for our method. The new clients have data from different individuals compared to pre-training data.

\textbf{Models.} We use a conditional U-Net architecture where client id's, that are converted to vector embeddings, are used as the conditioning information. We use 14 layer, 6M parameter architecture for \textsc{MNIST} experiments, and 18 layer 13M parameter for CIFAR-10, CelebA experiments. More details can be found in Appendix. We use AdamW \cite{loshchilov2018decoupled} optimizer and provide more details on hyper-parameters in Appendix~\ref{app:experiments}.


\begin{figure}[h]
  \centering
  \captionof{table}{Pre-training performance for participating clients, and fine-tuning performance for newly joined client performance KID$\downarrow$ values of respective methods on respective datasets.}
  \vspace{-0.2cm}
  \begin{tabular}{c|ccc|ccc} \hline
    {Method} &
    \multicolumn{3}{c|}{Pre‑training} &
    \multicolumn{3}{c}{New client fine‑tuning} \\[-2pt]
    & \textsc{MNIST} & \textsc{CIFAR‑10} & \textsc{CelebA} &
      \textsc{MNIST} & \textsc{CIFAR‑10} & \textsc{CelebA} \\ \hline
    FedAvg+fine‑tuning   & 0.015 & 0.034 & 0.051 & 0.028 & 0.101 & 0.075 \\
    \spire               & \textbf{0.010} & 0.034 & \textbf{0.046} & \textbf{0.012} & \textbf{0.038} & \textbf{0.061} \\
    Shared Representation & \textbf{0.010} & \textbf{0.031} & 0.056 & 0.017 & 0.048 & 0.094 \\
    Meta learning        & 0.061 & 0.073 & 0.081 & 0.079 & 0.093 & 0.120 \\ \hline
  \end{tabular}
  \label{tab:kid}
\end{figure}

\begin{figure}
    \centering
    \includegraphics[width=1\linewidth]{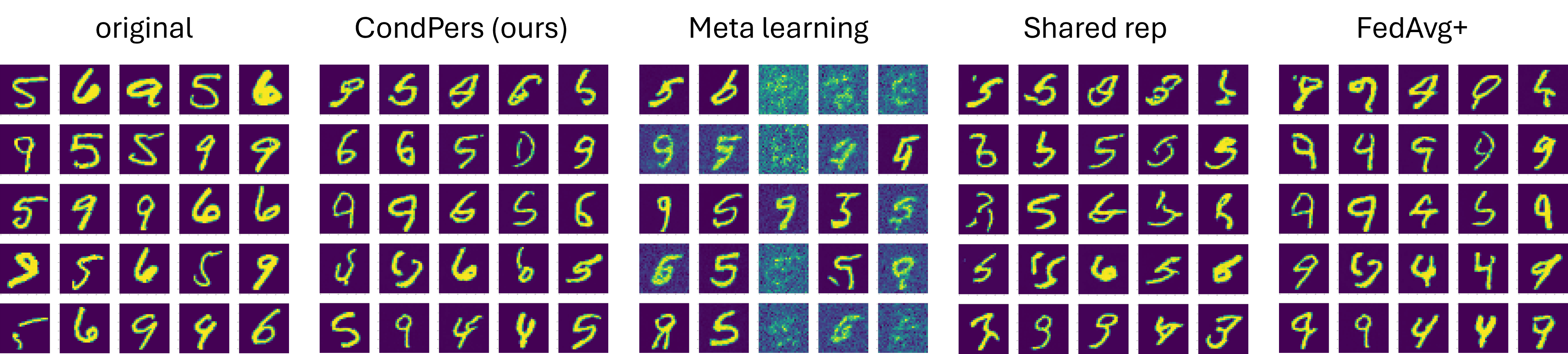}
    \caption{Grid of original \textsc{MNIST} images and generated images by competing methods for particular client, who has sampled data from classes '5,6,9; using the same random seed. Our method results in less hallucinations, more diversity while preserving structure. }
    \label{fig:mnist-grid}
\end{figure}


\subsection{Numerical Results}
\textbf{Collaborative pre-training.} The main image quality metric we use is  Kernel Inception Distance (KID) \cite{binkowski2018demystifying}, as FID \cite{heusel2017gans} is not usable for smaller datasets due to biasedness. Table \ref{tab:kid} reports KID after the collaborative pre-training phase. On \textsc{MNIST} and \textsc{CelebA}  our \spire achieves the lowest KID, improving over baselines. Particularly, the gap is significant on \textsc{CelebA}  dataset. On \textsc{CIFAR-10} \spire matches \textit{FedAvg+FT} while the \textit{Shared-Rep.} variant edges out both methods by a small margin. We attribute the \textsc{CIFAR-10} result to the fact that all clients observe only a single class during pre-training—global feature extractors alone already generalize reasonably well in that setting. Across all three data sets the meta-learning baseline struggles, confirming the practical difficulty of applying first-order MAML–style algorithms to large diffusion models.

\textbf{Generalization to new clients.} The advantage of \spire becomes far more pronounced once entirely new clients—holding data from unseen class mixtures or unseen identities—join the federation (Table \ref{tab:kid},b). After less than 10 epochs of local fine-tuning \spire yields large gains over the next best method Shared representation. Neither \textit{FedAvg+FT} nor meta-learning can close the gap; in fact, both are outperformed by \spire on every data set. These results support our central claim: \emph{embedding the client identifier as a learnable conditioning signal equips a \emph{single} diffusion model with enough capacity to adapt to novel client distributions.}

\textbf{Qualitative analysis.} Figures \ref{fig:mnist-grid} and \ref{fig:celeba-grid} (in Appendix~\ref{app:experiments}) visualize samples drawn from models trained on \textsc{CelebA}  and \textsc{MNIST}, respectively, for a particular client.  For MNIST, the client has access to images labeled as '5','6', and '9'. For the most of the images \spire is able to generate 5,6,9 images with occasional hallucination from other classes. MAML approach fails to generate quality images. Shared representation approach often misses the structure of the numbers, and FedAvg+FT does not generate diverse images implying an overfitting to the data. For \textsc{CelebA} , all methods struggle with the coloring of images due to the challenge in pretraining task. Images generated by \spire exhibit noticeably sharper facial details. Competing methods over-fit, producing more washed-out or saturated outputs. The faces generated by our method are more diverse and have better visual features e.g. shadowing.
\begin{figure}[h]
    \centering
    \begin{subfigure}[b]{0.4\linewidth}
        \centering
        \includegraphics[width=\linewidth]{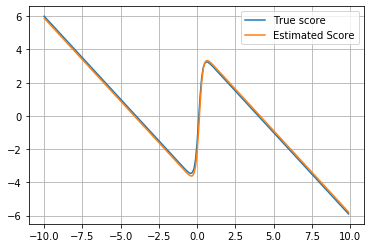}
        \label{fig:true-estimated-score}
    \end{subfigure}
    \hfill
    \begin{subfigure}[b]{0.4\linewidth}
        \centering
        \includegraphics[width=\linewidth]{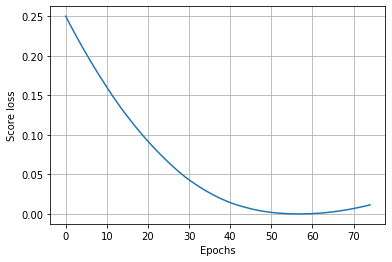}
        \label{fig:score-loss}
    \end{subfigure}
    \caption{Estimated score function (left) and score loss convergence (right) for the synthetic experiments. The true mixing weight and mean are 0.7 and 4 respectively, the estimated are 0.72 and 4.11. }
    \label{fig:synth-figures}
\end{figure}

\textbf{Synthetic experiments for GMMs. }
For synthetic experiments, we are verifying correctness of our found score function form in \eqref{eq:network}. Given the objective in \eqref{eq:score-loss}, we use Adam algorithm to update both the mean and mixing weights. The results in Figure\ref{fig:synth-figures} indicate convergence of training and success of estimating the score through the DDPM objective. See Appendix for experimental details. 

\textbf{Conditional personalization is robust to catastrophic forgetting. } A major advantage of our method, in new client fine-tuning, compared to competing methods is its robustness to overfitting, i.e. catastrophic forgetting. Since our method relies on updating only a small part of the model, previously learned information through the backbone has no risk of getting lost during the fine-tuning stage. This prevents an oversensitivity towards number of fine-tuning epochs or learning rate; which are mainly responsible for a potential forgetting. To display this notion, we compare to shared representation approach and illustrate the resulting KID values for new clients under different finetuning epochs Figure~\ref{fig:overfitting}.

\begin{figure}
  \centering
  \includegraphics[width=0.38\textwidth]{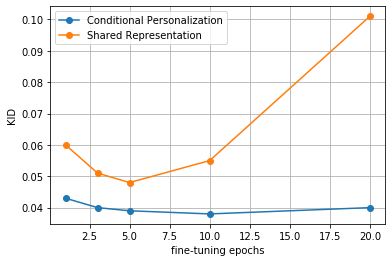}
  \caption{The number of local epochs need to be carefully fine-tuned for new clients using shared representation approach due to overfitting and underfitting risk, on the other hand conditional personalization performs better and is more robust with respect to number of local epochs..}
  \label{fig:overfitting}
\end{figure}

\textbf{Overall. }In the experiments we showed the efficacy of our proposed way of personalization. \spire is especially superior to other methods for new client fine-tuning which is a very crucial problem in FL and, in large scale ML. Due to the natural PEFT implied by our method, it is robust to choices such as learning rate and number of iterations. It does not carry risks such as catastrophic forgetting. 
\section{Conclusion}
We introduced \spire, the first framework that treats personalized diffusion in federated learning as a \emph{conditional generation} problem.  By decomposing the model into a global backbone and lightweight client embeddings, \spire enables parameter‑efficient, communication‑free adaptation on devices that hold only a handful of samples.  Our theory establishes a link between gradient descent on the DDPM objective and Gaussian‑mixture models, yielding dimension‑free error bounds for the client‑specific parameters.  Empirically, \spire consistently matches or surpasses strong baselines during collaborative pre‑training and delivers superior performance for new‑client personalization, resulting quantitatively and qualitatively better generation. Future directions include (i) extending the embedding‑based conditioning to multi‑modal and text‑to‑image diffusion, (ii) integrating differential‑privacy guarantees, and (iii) exploring hierarchical embeddings that capture groups of clients. Main limitation of our work is that it requires changing pre-trainig recipe to be done in conditioned on clients, hence, another future work that we would like to explore is integrating \spire into large pre-trained diffusion models such as StableDiffusion \cite{rombach2022highresolutionimagesynthesislatent}. We hope our results spur further research on principled, efficient personalization for large generative models in federated settings.

\newpage
\bibliographystyle{plain}
\bibliography{bibliography}

\begin{thebibliography}{10}

\bibitem{acar2021debiasing}
Durmus Alp~Emre Acar, Yue Zhao, Ruizhao Zhu, Ramon Matas, Matthew Mattina, Paul Whatmough, and Venkatesh Saligrama.
\newblock Debiasing model updates for improving personalized federated training.
\newblock In {\em International Conference on Machine Learning}, pages 21--31. PMLR, 2021.

\bibitem{binkowski2018demystifying}
Mikołaj Bińkowski, Dougal~J. Sutherland, Michael Arbel, and Arthur Gretton.
\newblock Demystifying {MMD} {GAN}s.
\newblock In {\em International Conference on Learning Representations}, 2018.

\bibitem{chen2022selfaware}
Huili Chen, Jie Ding, Eric~William Tramel, Shuang Wu, Anit~Kumar Sahu, Salman Avestimehr, and Tao Zhang.
\newblock Self-aware personalized federated learning.
\newblock In Alice~H. Oh, Alekh Agarwal, Danielle Belgrave, and Kyunghyun Cho, editors, {\em Advances in Neural Information Processing Systems}, 2022.

\bibitem{chen2023sampling}
Sitan Chen, Sinho Chewi, Jerry Li, Yuanzhi Li, Adil Salim, and Anru Zhang.
\newblock Sampling is as easy as learning the score: theory for diffusion models with minimal data assumptions.
\newblock In {\em The Eleventh International Conference on Learning Representations}, 2023.

\bibitem{chen2024learninggeneralgaussianmixtures}
Sitan Chen, Vasilis Kontonis, and Kulin Shah.
\newblock Learning general gaussian mixtures with efficient score matching, 2024.

\bibitem{collins-icml21}
Liam Collins, Hamed Hassani, Aryan Mokhtari, and Sanjay Shakkottai.
\newblock Exploiting shared representations for personalized federated learning.
\newblock In Marina Meila and Tong Zhang, editors, {\em International Conference on Machine Learning (ICML)}, volume 139 of {\em Proceedings of Machine Learning Research}, pages 2089--2099. {PMLR}, 2021.

\bibitem{daskalakis17b}
Constantinos Daskalakis, Christos Tzamos, and Manolis Zampetakis.
\newblock Ten steps of em suffice for mixtures of two gaussians.
\newblock In {\em Proceedings of the 2017 Conference on Learning Theory}, volume~65 of {\em Proceedings of Machine Learning Research}, pages 704--710, 2017.

\bibitem{deng2020adaptive}
Yuyang Deng, Mohammad~Mahdi Kamani, and Mehrdad Mahdavi.
\newblock Adaptive personalized federated learning.
\newblock {\em arXiv preprint arXiv:2003.13461}, 2020.

\bibitem{dhariwal2021diffusion}
Prafulla Dhariwal and Alexander~Quinn Nichol.
\newblock Diffusion models beat {GAN}s on image synthesis.
\newblock In A.~Beygelzimer, Y.~Dauphin, P.~Liang, and J.~Wortman Vaughan, editors, {\em Advances in Neural Information Processing Systems}, 2021.

\bibitem{dinh2020personalized}
Canh~T. Dinh, Nguyen~H. Tran, and Tuan~Dung Nguyen.
\newblock Personalized federated learning with moreau envelopes.
\newblock In {\em Advances in Neural Information Processing Systems}, 2020.

\bibitem{du2021fewshot}
Simon~Shaolei Du, Wei Hu, Sham~M. Kakade, Jason~D. Lee, and Qi~Lei.
\newblock Few-shot learning via learning the representation, provably.
\newblock In {\em International Conference on Learning Representations}, 2021.

\bibitem{fallah2020personalized}
Alireza Fallah, Aryan Mokhtari, and Asuman Ozdaglar.
\newblock Personalized federated learning: A meta-learning approach.
\newblock In {\em Advances in Neural Information Processing Systems}, 2020.

\bibitem{ghosh2020efficient}
Avishek Ghosh, Jichan Chung, Dong Yin, and Kannan Ramchandran.
\newblock An efficient framework for clustered federated learning.
\newblock In {\em Advances in Neural Information Processing Systems}, 2020.

\bibitem{hanzely2020federated}
Filip Hanzely and Peter Richtárik.
\newblock Federated learning of a mixture of global and local models.
\newblock {\em arXiv preprint arXiv:2002.05516}, 2020.

\bibitem{heusel2017gans}
Martin Heusel, Hubert Ramsauer, Thomas Unterthiner, Bernhard Nessler, and Sepp Hochreiter.
\newblock Gans trained by a two time-scale update rule converge to a local nash equilibrium.
\newblock {\em Advances in neural information processing systems}, 30, 2017.

\bibitem{kohdak2019adaptive}
Mikhail Khodak, Maria-Florina~F Balcan, and Ameet~S Talwalkar.
\newblock Adaptive gradient-based meta-learning methods.
\newblock In {\em Advances in Neural Information Processing Systems}, 2019.

\bibitem{kotelevskii2022fedpop}
Nikita~Yurevich Kotelevskii, Maxime Vono, Alain Durmus, and Eric Moulines.
\newblock Fedpop: A bayesian approach for personalised federated learning.
\newblock In Alice~H. Oh, Alekh Agarwal, Danielle Belgrave, and Kyunghyun Cho, editors, {\em Advances in Neural Information Processing Systems}, 2022.

\bibitem{kwon2020emalgorithmgivessampleoptimality}
Jeongyeol Kwon and Constantine Caramanis.
\newblock The em algorithm gives sample-optimality for learning mixtures of well-separated gaussians, 2020.

\bibitem{li2020federated}
Tian Li, Anit~Kumar Sahu, Manzil Zaheer, Maziar Sanjabi, Ameet Talwalkar, and Virginia Smith.
\newblock Federated optimization in heterogeneous networks.
\newblock In {\em Proceedings of Machine Learning and Systems 2020, MLSys}, 2020.

\bibitem{lin2020ensemble}
Tao Lin, Lingjing Kong, Sebastian~U. Stich, and Martin Jaggi.
\newblock Ensemble distillation for robust model fusion in federated learning.
\newblock In {\em Advances in Neural Information Processing Systems}, 2020.

\bibitem{liu2015faceattributes}
Ziwei Liu, Ping Luo, Xiaogang Wang, and Xiaoou Tang.
\newblock Deep learning face attributes in the wild.
\newblock In {\em Proceedings of International Conference on Computer Vision (ICCV)}, December 2015.

\bibitem{loshchilov2018decoupled}
Ilya Loshchilov and Frank Hutter.
\newblock Decoupled weight decay regularization.
\newblock In {\em International Conference on Learning Representations}, 2019.

\bibitem{mansour2020approaches}
Yishay Mansour, Mehryar Mohri, Jae Ro, and Ananda~Theertha Suresh.
\newblock Three approaches for personalization with applications to federated learning.
\newblock {\em arXiv preprint arXiv:2002.10619}, 2020.

\bibitem{marfoq2021federated}
Othmane Marfoq, Giovanni Neglia, Aur{\'e}lien Bellet, Laetitia Kameni, and Richard Vidal.
\newblock Federated multi-task learning under a mixture of distributions.
\newblock {\em Advances in Neural Information Processing Systems}, 34, 2021.

\bibitem{ozkara2023}
Kaan Ozkara, Antonious Girgis, Deepesh Data, and Suhas Diggavi.
\newblock A statistical framework for personalized federated learning and estimation: Theory, algorithms, and privacy.
\newblock In {\em International Conference on Learning Representations}, 2023.

\bibitem{ozkara2025adept}
Kaan Ozkara, Bruce Huang, Ruida Zhou, and Suhas Diggavi.
\newblock {ADEPT}: Hierarchical bayes approach to personalized federated unsupervised learning.
\newblock In {\em The 28th International Conference on Artificial Intelligence and Statistics}, 2025.

\bibitem{ozkara2021quped}
Kaan Ozkara, Navjot Singh, Deepesh Data, and Suhas Diggavi.
\newblock Quped: Quantized personalization via distillation with applications to federated learning.
\newblock {\em Advances in Neural Information Processing Systems}, 34, 2021.

\bibitem{rombach2022highresolutionimagesynthesislatent}
Robin Rombach, Andreas Blattmann, Dominik Lorenz, Patrick Esser, and Björn Ommer.
\newblock High-resolution image synthesis with latent diffusion models, 2022.

\bibitem{shah2023learning}
Kulin Shah, Sitan Chen, and Adam Klivans.
\newblock Learning mixtures of gaussians using the {DDPM} objective.
\newblock In {\em Thirty-seventh Conference on Neural Information Processing Systems}, 2023.

\bibitem{smith2017federated}
Virginia Smith, Chao{-}Kai Chiang, Maziar Sanjabi, and Ameet~S. Talwalkar.
\newblock Federated multi-task learning.
\newblock In {\em Advances in Neural Information Processing Systems}, pages 4424--4434, 2017.

\bibitem{tian2020rethinking}
Yonglong Tian, Yue Wang, Dilip Krishnan, Joshua~B Tenenbaum, and Phillip Isola.
\newblock Rethinking few-shot image classification: a good embedding is all you need?
\newblock In {\em European Conference on Computer Vision}, pages 266--282. Springer, 2020.

\bibitem{vanhaesebrouck2017decentralized}
Paul Vanhaesebrouck, Aur{\'e}lien Bellet, and Marc Tommasi.
\newblock Decentralized collaborative learning of personalized models over networks.
\newblock In {\em Artificial Intelligence and Statistics}, pages 509--517. PMLR, 2017.

\bibitem{zantedeschi2020fully}
Valentina Zantedeschi, Aur{\'e}lien Bellet, and Marc Tommasi.
\newblock Fully decentralized joint learning of personalized models and collaboration graphs.
\newblock In {\em International Conference on Artificial Intelligence and Statistics}, pages 864--874. PMLR, 2020.

\bibitem{zhang2021personalized}
Michael Zhang, Karan Sapra, Sanja Fidler, Serena Yeung, and Jose~M. Alvarez.
\newblock Personalized federated learning with first order model optimization.
\newblock In {\em International Conference on Learning Representations}, 2021.

\end{thebibliography}

\newpage
\appendix

\section{Proofs and other details for theoretical results} \label{app:proofs}
\subsection{Diffusion model setup}
\rem{We first start with the forward process. Ornstein-Uhlenbeck (OU) forward diffusion process describes the gradual transformation from a true data sample to noise through a SDE, it is defined as
\begin{align} \label{eq:forward}
    \mathrm{d} X_t = -\delta_t X_t \, \mathrm{d}t + \sqrt{2 \delta_t} \, \mathrm{d} {W}_t, \quad X_0 \sim q_0(x),
\end{align}
here $q_0(x)$ denotes the true underlying distribution, similarly $q_t(x)$ will be used to denote a latent distribution, $\delta_t$ is a drift coefficient, commonly assumed a constant e.g. 1. ${W}_t$ is the standard Brownian motion. $X_t \sim q_t$ is a latent variable. Equivalently, it can be shown for some $\beta_t:\lim_{t \rightarrow\infty}\beta_t = 1, \beta_0=0$,
\begin{align*}\label{eq:equ-fwp}
    X_t = \sqrt{1-\beta_t^2}X_0 + \beta_t Z_t, \text{with } X_0 \sim q_0 \text{ and } Z_t \sim \mathcal{N}(0,\mathrm{I}).
\end{align*}
Corresponding to the forward process, is the backward SDE process, which is defined to formalize recovering a data sample from true distribution, given a corrupted sample and score function, 
\begin{align*}
    \mathrm{d} \overleftarrow{X}_{t} = 
\left[ 
\delta_{T-t} \overleftarrow{X}_{t} + 2 \delta_{T-t} \nabla \log q_{T-t}(\overleftarrow{X}_{t}) 
\right] \mathrm{d}t 
+ \sqrt{2 \delta_{T-t}} \, \mathrm{d} {W}_t, 
\quad \overleftarrow{X}_{0} \sim q_T(\cdot).
\end{align*}
In the backward process, we use $\overleftarrow{X}_{0}$ to denote a sample from pure noise $q_T$. $\nabla \log q_{T-t}(\overleftarrow{X}_{t})$ is the true score function at time $T-t$ (or at time $t$ in terms of the forward process). Of course the true score function is unknown, hence we need to estimate it from the data using a fixed pure noise distribution. To that end, one constructs the score matching optimization criterion,
\begin{align}
\min_{\theta} \int_\tau^T 
\mathbb{E}_{X_t \sim q_t(\cdot)} 
\left[
\left\|
s_{\theta}(x_t, t) - \nabla \log q_t( X_0)
\right\|^2
\right] 
\, \mathrm{d}t,
\end{align}
where $\theta$ is a neural network that is assumed to parameterize the score function. Equivalent to score matching is the DDPM view, which yields,
\begin{align}
\min_{\theta} \int_\tau^T 
\mathbb{E}_{X_0 \sim q_0(\cdot), Z_t} 
\left[
\left\|
s_{\theta}(X_t, t) + \frac{Z_t}{\beta_t}
\right\|^2
\right] 
\, \mathrm{d}t.
\end{align}
}

\textbf{EM updates.} EM algorithm is the most common approach to find parameters of GMMs. E-step is consists of finding the soft assignments or so called responsibilities, and M-step corresponds to updating the parameters. As can be seen in \cite{kwon2020emalgorithmgivessampleoptimality}:
\begin{align*}
\text{(E-step)}:&\quad r_i(X) = \frac{w_i \exp(-\|X - \mu_i\|^2/2)}{\sum_{j=1}^k w_j \exp(-\|X - \mu_j\|^2/2)} \substack, \text{ where subscripts denote cluster assignments}\\
\text{ in our case:} & \quad r_1(X) =  \frac{w \exp(-\|X - \mu\|^2/2)}{ w \exp(-\|X - \mu\|^2/2)+(1-w) \exp(-\|X + \mu\|^2/2)}, r_2(X) = 1- r_1(X) \\[6pt]
\text{(M-step)}:&\quad 
w^+ = \mathbb{E}_X[r_1(X)],\quad 
\mu^+ = \mathbb{E}_X[r_1(X) X]/\mathbb{E}_X[r_1(X)] \text{ where $+$ denotes the next iteration}.
\end{align*}

\subsection{Additional Proofs in Section ~\ref{sec:theory}}
\begin{lemma}[Solution of the OU forward SDE]\label{lem:ou_solution}
Let \((W_t)_{t\ge 0}\) be a standard $d$-dimensional Brownian motion independent of
the initialization $X_0 \sim q_0$.
Consider the Ornstein–Uhlenbeck (OU) forward (noising) process
\begin{equation}\label{eq:ou_sde}
    dX_t \;=\; -\,X_t\,dt \;+\; \sqrt{2}\,dW_t,
    \qquad X_{0}\sim q_{0}.
\end{equation}
Then for every $t\ge 0$ the marginal distribution of $X_t$ is
\begin{equation}\label{eq:ou_marginal}
    X_t \;=\; \mathrm e^{-t}\,X_0
          \;+\;
          \sqrt{1-\mathrm e^{-2t}}\;Z_t,
    \qquad Z_t \sim \mathcal N\!\bigl(0,\mathrm I\bigr),
\end{equation}
with $Z_t$ independent of $X_0$.
\end{lemma}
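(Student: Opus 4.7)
The plan is to solve the linear SDE \eqref{eq:ou_sde} explicitly via the integrating factor $e^{t}$, and then identify the stochastic integral term with the Gaussian variable in \eqref{eq:ou_marginal}. Concretely, I would set $Y_t := e^{t} X_t$ and apply It\^o's formula (the drift of the OU SDE is linear and the diffusion coefficient is constant, so the cross‑variation terms vanish) to obtain the much simpler SDE $dY_t = \sqrt{2}\,e^{t}\,dW_t$. Integrating from $0$ to $t$ and multiplying by $e^{-t}$ yields the closed‑form expression
\begin{equation*}
    X_t \;=\; e^{-t} X_0 \;+\; \sqrt{2}\,e^{-t}\!\int_{0}^{t} e^{s}\,dW_s .
\end{equation*}

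Next I would analyze the law of the stochastic integral $I_t := \sqrt{2}\,e^{-t}\int_0^t e^{s}\,dW_s$. Since the integrand is deterministic, $I_t$ is a Wiener integral and therefore a centered Gaussian vector. Its covariance is computed by the It\^o isometry,
\begin{equation*}
    \operatorname{Cov}(I_t) \;=\; 2\,e^{-2t}\!\int_{0}^{t} e^{2s}\,ds\;\mathrm I \;=\; \bigl(1-e^{-2t}\bigr)\,\mathrm I ,
\end{equation*}
so $I_t \stackrel{d}{=} \sqrt{1-e^{-2t}}\,Z_t$ with $Z_t\sim\mathcal N(0,\mathrm I)$. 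This gives exactly the representation \eqref{eq:ou_marginal}.

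Finally, independence of $Z_t$ and $X_0$ follows immediately because the stochastic integral is measurable with respect to the Brownian filtration $\sigma(W_s : 0\le s\le t)$, which by hypothesis is independent of the initialization $X_0$. I do not expect any genuine obstacle: the only subtlety is verifying that It\^o's formula applies cleanly to the non-random time-dependent transformation $y\mapsto e^{t}y$ (which it does, since the transformation is smooth and the diffusion coefficient is constant), and ensuring the equality in \eqref{eq:ou_marginal} is read as equality in distribution of marginals rather than pathwise identification with an externally given $Z_t$.
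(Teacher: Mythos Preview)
Your proposal is correct and follows essentially the same approach as the paper: use the integrating factor $e^{t}$ (equivalently, apply It\^o's formula to $Y_t=e^{t}X_t$) to reduce the SDE to $dY_t=\sqrt{2}\,e^{t}\,dW_t$, integrate to get the explicit representation, and then identify the Wiener integral as a centered Gaussian with covariance $(1-e^{-2t})\mathrm I$ via the It\^o isometry. Your explicit remark on independence via measurability with respect to the Brownian filtration and your caveat about reading \eqref{eq:ou_marginal} as equality in distribution are appropriate refinements of the same argument.
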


\begin{proof}
Multiply \eqref{eq:ou_sde} by the integrating factor $\mathrm e^{t}$:
\[
    \mathrm e^{t}\,dX_t + \mathrm e^{t}\,X_t\,dt
    \;=\;
    \sqrt{2}\,\mathrm e^{t}\,dW_t
    \;\Longrightarrow\;
    d\!\bigl(\mathrm e^{t}X_t\bigr)
    = \sqrt{2}\,\mathrm e^{t}\,dW_t.
\]
Integrating from $0$ to $t$ and dividing by $\mathrm e^{t}$ yields
\[
    X_t
    \;=\;
    \mathrm e^{-t}X_0
    \;+\;
    \sqrt{2}\,\mathrm e^{-t}\!\int_{0}^{t}\!\mathrm e^{s}\,dW_s.
\]
Because the Itô integral is Gaussian with mean~$0$, its covariance is
\[
    \operatorname{Cov}\!\Bigl(\sqrt{2}\,\mathrm e^{-t}\!\int_{0}^{t}\!\mathrm e^{s}\,dW_s\Bigr)
    = 2\,\mathrm e^{-2t}\int_{0}^{t}\!\mathrm e^{2s}\,ds
    = 1-\mathrm e^{-2t}.
\]
Thus the stochastic integral can be written as
\(
    \sqrt{1-\mathrm e^{-2t}}\;Z_t
\)
with $Z_t \sim \mathcal N(0,\mathrm I)$, independent of $X_0$.
Substituting back gives \eqref{eq:ou_marginal}, completing the proof.
\end{proof}

\begin{lemma}[Two–component mixture is preserved by the OU forward process]
\label{lem:ou_mixture_two}
Let the initial distribution be the symmetric two–component Gaussian mixture  
\[
q_{0}(x)=w\,\mathcal{N}\!\bigl(x;\mu,\mathbf{I}\bigr)\;+\;(1-w)\,\mathcal{N}\!\bigl(x;-\mu,\mathbf{I}\bigr),
\qquad 0<w<1,\;\;\mu\in\mathbb{R}^{d}.
\]
Consider the Ornstein–Uhlenbeck (OU) forward SDE
\[
X_t \;=\; e^{-t} X_{0} \;+\;\sqrt{1-e^{-2t}}\;Z_t,
\qquad\text{with}\; Z_t\sim\mathcal{N}(0,\mathbf{I})\;\text{independent of }X_{0}.
\]
Then, for every $t\ge 0$, the law of $X_t$ remains a two–component mixture
\[
q_{t}(x)=w\,\mathcal{N}\!\bigl(x;\mu_t,\mathbf{I}\bigr)\;+\;(1-w)\,\mathcal{N}\!\bigl(x;-\mu_t,\mathbf{I}\bigr),
\qquad\text{where}\;\;\mu_t \;:=\;e^{-t}\mu .
\]
\end{lemma}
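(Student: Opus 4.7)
The strategy is to condition on the latent mixture component and exploit the linearity of the OU map together with the independence of $X_0$ and $Z_t$. Introduce the latent indicator $C\in\{+,-\}$ with $\Pr(C=+)=w$ and $\Pr(C=-)=1-w$, so that conditionally $X_0\mid\{C=+\}\sim\mathcal{N}(\mu,\mathbf{I})$ and $X_0\mid\{C=-\}\sim\mathcal{N}(-\mu,\mathbf{I})$. By Lemma~\ref{lem:ou_solution}, $X_t$ is an affine function of $X_0$ plus an independent Gaussian, so each conditional law $X_t\mid C$ is again Gaussian and can be computed explicitly.

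The first step is the routine Gaussian computation on each branch: if $X_0\mid\{C=+\}\sim\mathcal{N}(\mu,\mathbf{I})$, then $e^{-t}X_0\mid\{C=+\}\sim\mathcal{N}(e^{-t}\mu,e^{-2t}\mathbf{I})$, and adding the independent noise $\sqrt{1-e^{-2t}}Z_t\sim\mathcal{N}(0,(1-e^{-2t})\mathbf{I})$ yields
\begin{equation*}
X_t\mid\{C=+\}\;\sim\;\mathcal{N}\!\bigl(e^{-t}\mu,\;e^{-2t}\mathbf{I}+(1-e^{-2t})\mathbf{I}\bigr)\;=\;\mathcal{N}(\mu_t,\mathbf{I}),
\end{equation*}
and the analogous identity holds for $C=-$ with mean $-\mu_t$. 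Here I use only that independent Gaussians add in distribution (so variances add) and the fact that $Z_t$ is independent of both $X_0$ and $C$, which follows from the construction of the OU solution in Lemma~\ref{lem:ou_solution}.

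The second step is to combine the two branches via the law of total probability. Writing the density of $X_t$ as
\begin{equation*}
q_t(x)\;=\;\Pr(C=+)\,p_{X_t\mid C=+}(x)\;+\;\Pr(C=-)\,p_{X_t\mid C=-}(x),
\end{equation*}
and substituting the conditional Gaussians from the previous step, one obtains exactly $q_t(x)=w\,\mathcal{N}(x;\mu_t,\mathbf{I})+(1-w)\,\mathcal{N}(x;-\mu_t,\mathbf{I})$, which is the desired expression with $\mu_t=e^{-t}\mu$.

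There is no serious obstacle here: the only thing to be slightly careful about is that the Gaussian noise $Z_t$ in the OU solution is genuinely independent of the mixture indicator $C$ (not just of $X_0$ marginally), which is clear because $Z_t$ is built from the driving Brownian motion $W_{\cdot}$ that is, by hypothesis, independent of $X_0$ and hence of $C$. Once that is noted, the result is immediate from the two Gaussian computations and the mixture decomposition of $q_0$.
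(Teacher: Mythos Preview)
Your proposal is correct and follows essentially the same approach as the paper: condition on the latent mixture component, compute the resulting Gaussian law of $X_t$ on each branch using linearity of the OU map and independence of the added Gaussian noise, and then recombine via the law of total probability. Your extra remark about $Z_t$ being independent of the indicator $C$ (not just of $X_0$) is a nice bit of care, but otherwise the argument is identical to the paper's.
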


\begin{proof}
Condition on the component of the mixture that generated $X_0$.

\paragraph{Component ``$+\mu$''.}
If $X_{0}\sim\mathcal{N}(\mu,\mathbf{I})$, then
\[
X_t \;=\; e^{-t}X_{0}+\sqrt{1-e^{-2t}}\,Z_t
        \;\sim\; \mathcal{N}\!\bigl(e^{-t}\mu,\;e^{-2t}\mathbf{I}+(1-e^{-2t})\mathbf{I}\bigr)
        \;=\;\mathcal{N}\!\bigl(e^{-t}\mu,\mathbf{I}\bigr).
\]

\paragraph{Component ``$-\mu$''.}
Analogously, if $X_{0}\sim\mathcal{N}(-\mu,\mathbf{I})$, we obtain 
\[
X_t\;\sim\;\mathcal{N}\!\bigl(-e^{-t}\mu,\mathbf{I}\bigr).
\]

\paragraph{Mixture.}
Because the two cases occur with probabilities $w$ and $1-w$, the unconditional
distribution of $X_{t}$ is the weighted sum of the two Gaussians derived above,
namely
\[
q_{t}(x)=w\,\mathcal{N}\!\bigl(x;e^{-t}\mu,\mathbf{I}\bigr)+(1-w)\,\mathcal{N}\!\bigl(x;-e^{-t}\mu,\mathbf{I}\bigr),
\]
which coincides with the claimed form after defining $\mu_t=e^{-t}\mu$.
\end{proof}

\begin{proof}[Proof of Lemma \ref{lem:score}. ]
Recall that\; $q_t$ is the symmetric two–component mixture
\[
q_t(x)=w\,\mathcal N\!\bigl(x;\mu_t,\mathrm I\bigr)+(1-w)\,\mathcal N\!\bigl(x;-\mu_t,\mathrm I\bigr),
\qquad  
0<w<1,\;\;\;\mu_t=e^{-t}\mu .
\]

Because $q_t(x)=\sum_{k=1}^{2}\pi_k p_k(x)$ with 
$p_1(x)=\mathcal N(x;\mu_t,\mathrm I),\;\;p_2(x)=\mathcal N(x;-\mu_t,\mathrm I)$ and
weights $\pi_1=w,\;\pi_2=1-w$, the score can be written as  
\[
\nabla_x\log q_t(x)
      =\frac{\displaystyle\sum_{k=1}^{2}\pi_k
                 \,p_k(x)\,\nabla_x\log p_k(x)}
                {\displaystyle\sum_{k=1}^{2}\pi_k p_k(x)} .
\]
For a Gaussian with unit covariance,  
$\nabla_x\log \mathcal N(x;m,\mathrm I)=-(x-m)$.  Hence 
\[
\nabla_x\log q_t(x)
       =-\Bigl[r_1(x)\bigl(x-\mu_t\bigr)+r_2(x)\bigl(x+\mu_t\bigr)\Bigr],
\]
where 
\[
r_1(x)\;=\;\frac{w\,p_1(x)}{w\,p_1(x)+(1-w)p_2(x)}, 
\qquad   
r_2(x)=1-r_1(x)
\]
are the posterior (“belief’’) probabilities of the two components.
A short algebraic simplification gives  
\begin{equation}\label{eq:score_basic}
\nabla_x\log q_t(x)\;=\;\bigl(2r_1(x)-1\bigr)\,\mu_t \;-\;x .
\end{equation}

Writing the Gaussian densities explicitly:
\[
p_1(x)=\frac{1}{(2\pi)^{d/2}}\exp\!\Bigl[-\tfrac12\|x-\mu_t\|^2\Bigr],
\quad
p_2(x)=\frac{1}{(2\pi)^{d/2}}\exp\!\Bigl[-\tfrac12\|x+\mu_t\|^2\Bigr].
\]
Then
\[
\frac{r_1(x)}{1-r_1(x)}
 =\frac{w\,p_1(x)}{(1-w)\,p_2(x)}
 =\frac{w}{1-w}\,
   \exp\!\Bigl[-\tfrac12(\|x-\mu_t\|^2-\|x+\mu_t\|^2)\Bigr].
\]
Because 
$\|x-\mu_t\|^2-\|x+\mu_t\|^2=-4\,\mu_t^\top x$, we obtain
\[
\frac{r_1(x)}{1-r_1(x)}
   =\exp\!\Bigl[\,2\mu_t^\top x\;+\;\log\!\frac{w}{1-w}\Bigr].
\]
Setting 
$
z(x):=2\mu_t^\top x\;+\;\log\!\frac{w}{1-w}
$
and recalling that 
$r_1=\sigma(z)$ with the logistic function $\sigma(z)=1/(1+e^{-z})$, we have  
$2r_1-1=\tanh\!\bigl(\tfrac12 z\bigr)$.  Therefore
\[
2r_1(x)-1
  \;=\;\tanh\!\Bigl(\mu_t^\top x+\tfrac12\log\frac{w}{1-w}\Bigr)
  \;=\;\tanh\!\Bigl(\mu_t^\top x-\tfrac12\log\frac{1-w}{w}\Bigr).
\]

Substituting the expression for $2r_1(x)-1$ back into 
\eqref{eq:score_basic} yields
\[
\nabla_x\log q_t(x)
   =\tanh\!\Bigl(\mu_t^\top x-\tfrac12\log\frac{w}{1-w}\Bigr)\mu_t
     \;-\;x,
\]
which is exactly the claimed form in Lemma~\ref{lem:score}.
\end{proof}

\section{Additional Details and Results on Experiments} \label{app:experiments}

\textbf{Experiment details. } We use Adam optimizer with constant learning rate following the original DDPM paper. For \textsc{CelebA, CIFAR-10} we use learning rate of $1e-4$, and for \textsc{MNIST} we use $1e-3$. During training, each round of local training corresponds to 1 epoch. For \textsc{CelebA}, we traing for a total of 800 epochs with 50 local iterations, for \textsc{CIFAR-10} we train for 400 epochs with 100 local iterations, and for \textsc{MNIST} we train for 200 epochs using 20 local iterations. For fine-tuning, for competing methods $\times 0.1$ of original learning rate gives the best results, for our method we use $0.01$ as the learning rate. 

\textbf{Model architecture. }For every dataset we employ the same high-level design: a identity-embedding layer maps each identity token 
\(y\in\{0,\dots,m-1\}\) to a learnable vector of dimension \(d_{\mathrm{emb}}\); this vector is injected into an \texttt{UNet2DModel} backbone through modulating activations.  Each backbone uses two
residual blocks per scale, while its depth and channel widths are dataset-specific:

\begin{itemize}[leftmargin=*]
\item \textbf{MNIST} (\(28\times 28\), 1 channel): three scales with feature widths
  \((32, 64, 64)\).
  The encoder follows
  \texttt{DownBlock2D} \(\rightarrow\)
  \texttt{AttnDownBlock2D} \(\rightarrow\)
  \texttt{AttnDownBlock2D}; the decoder mirrors this with two
  \texttt{AttnUpBlock2D}s and a final \texttt{UpBlock2D}.
  This lightweight configuration suffices for the low-resolution digit images.

\item \textbf{CIFAR-10} (\(32\times 32\), 3 channels): four scales with widths
  \((64, 128, 128, 128)\).
  The down path is
  \texttt{DownBlock2D} \(\rightarrow\)
  \texttt{AttnDownBlock2D} \(\rightarrow\)
  \texttt{AttnDownBlock2D} \(\rightarrow\)
  \texttt{DownBlock2D}; the up path is
  \texttt{UpBlock2D} \(\rightarrow\)
  \texttt{AttnUpBlock2D} \(\rightarrow\)
  \texttt{AttnUpBlock2D} \(\rightarrow\)
  \texttt{UpBlock2D}.
  Both width and depth are doubled relative to MNIST to handle the richer natural-image content.

\item \textbf{CelebA} (\(64\times 64\), 3 channels): we reuse the CIFAR backbone.
\end{itemize}

Across all variants, the parameter count rises from MNIST to
CIFAR/CelebA, yet the conditioning interface is unchanged. For our implementation we directly use label conditioning interface of UNet architecture. 

\textbf{ \textsc{CelebA} qualitative results. } In Figure~\ref{fig:celeba-grid} we can observe generated images for the modesl trained on CelebA dataset.

\begin{figure}[h]
    \centering
    \includegraphics[width=1\linewidth]{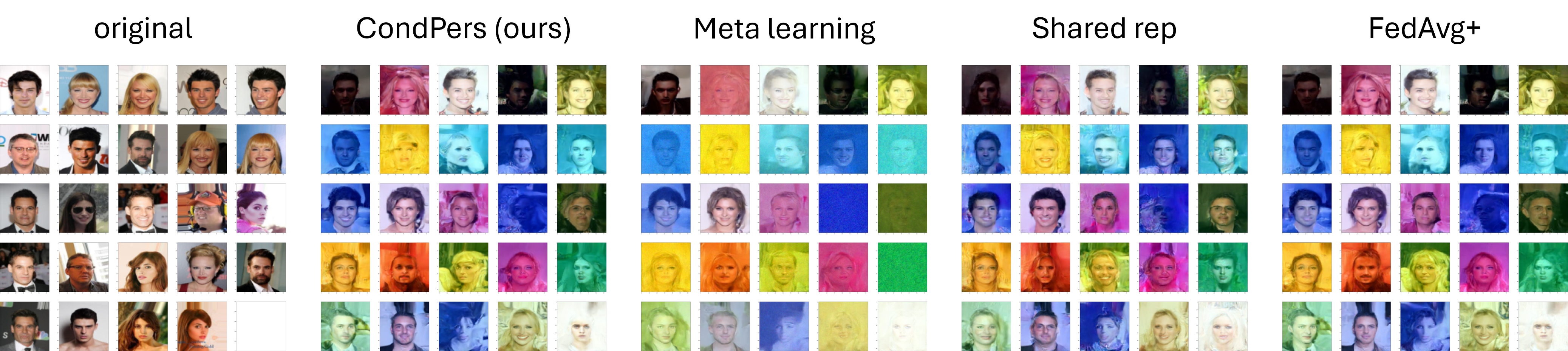}
    \caption{\rem{Grid of original \textsc{CelebA} images and generated images by competing methods for particular client, who has sampled data from 5 different individuals; using the same random seed \kaan{maybe to appendix}. }}
    \label{fig:celeba-grid}
\end{figure}


\textbf{Shared representation approach architecture. } Inspired by \cite{collins-icml21}, in our framework \spire, we introduce a novel \textit{shared representation} approach as a strong competing baseline. Concretely, each client’s U-Net is partitioned into three semantic parts: (i) the \emph{down-sampling path}, responsible for hierarchical feature extraction; (ii) the \emph{bottleneck (middle) block}, where the highest-level, most compressed latent representation is processed; and (iii) the \emph{up-sampling path}, which decodes latents back into the pixel space. During federated training, the down- and up-sampling paths are synchronized across clients at every communication round in \spire—mirroring the global feature extractor in \cite{collins-icml21} so that low and high-level visual features are learned from the entire population’s data as can be seen in Figure \ref{fig:sharedrep-arch}. In contrast, the bottleneck block is kept \emph{client-specific} and never uploaded to the server. This design has several advantages:

\begin{enumerate}
\item \textbf{Personalization without classification head swapping.} Unlike classifiers, diffusion U-Nets do not have a lightweight prediction head that can be individualized. By individualizing the bottleneck, we inject client-specific inductive bias at the deepest latent level, where semantic factors are most disentangled, enabling fine-grained customization of generation quality and style.
\item \textbf{Better fine-tuning performance.} In \spire, global synchronization of the encoder and decoder paths anchors each client’s individual bottleneck in a common feature space, mitigating overfitting to narrow data distributions and preserving cross-client coherence—a problem that pure fine-tuning baselines (e.g., FedAvg+FT) often suffer from. Note that, as we see in Section 5, conditional personalization within \spire still results in better fine-tuning performance.
\end{enumerate}

\textbf{Training hardware. } For training we use a GPU server of 6 NVIDIA RTX2080 GPUs. The longest training is done on \textsc{CelebA}, which lasts around 60 hours.

\begin{figure}[h]
    \centering
    \includegraphics[width=0.5\linewidth]{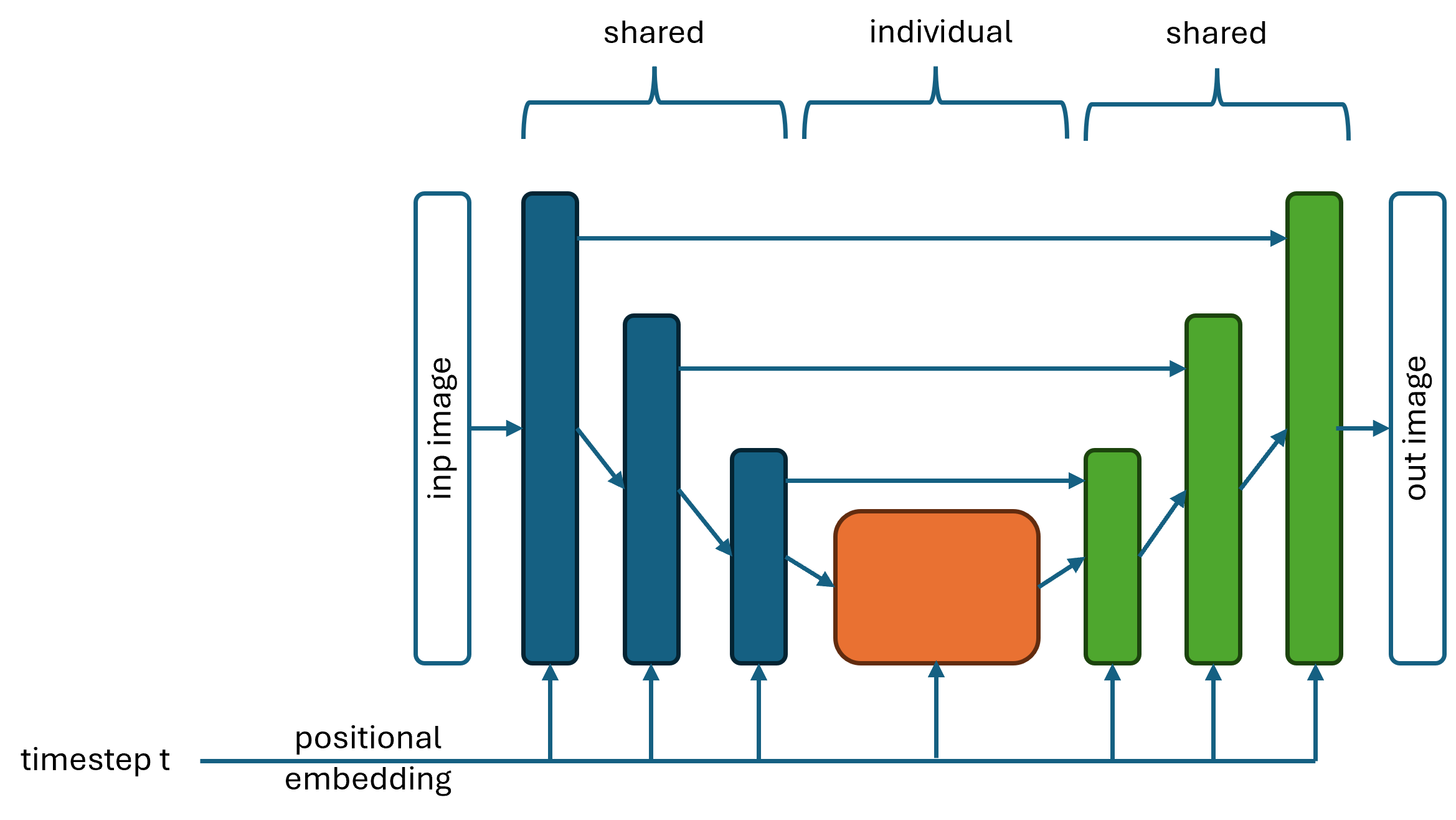}
    \caption{Shared representation partitioning. } 
    \label{fig:sharedrep-arch}
\end{figure}

\newpage

\end{document}